\newtheorem{theorem}{Theorem}
\newtheorem{lemma}{Lemma}
\numberwithin{lemma}{section}
\newtheorem{assumption}{Assumption}
\numberwithin{theorem}{section}
\newtheorem{defn}{Definition}
\numberwithin{defn}{section}
\newtheorem{corollary}{Corollary}
\numberwithin{corollary}{section}
\numberwithin{prop}{section}
\numberwithin{myLemma}{section}
\numberwithin{remark}{section}
\newcommand{\R}{\mathbb{R}}
\newcommand{\X}{\mathcal{X}}
\newcommand{\risk}{\mathcal{R}}
\newcommand{\Prob}{\mathbb{P}}
\newcommand{\N}{\mathbb{N}}
\newcommand{\E}{\mathbb{E}}
\newcommand{\one}{\mathds{1}}
\newcommand{\noisy}[1][]{
    \ifthenelse{\equal{#1}{Y}}
    {\tilde{Y}}
    {{#1}_{\text{corr}}}
    } 
\newcommand{\noisySample}{\noisy[\sample]}
\newcommand{\est}[1][]{\hat{#1}}
\newcommand{\KNNest}[1][f]{\hat{#1}_{n,k}}
\newcommand{\marginalDistribution}{\mu}
\newcommand{\regressionFunction}{\eta}
\newcommand{\noisyRegressionFunction}{\noisy[\regressionFunction]}
\newcommand{\estNoisyRegressionFunction}{\est[\regressionFunction]\noisy[]}
\newcommand{\probFlipFrom}[1][y]{p_{#1}}
\newcommand{\probFlipZeroToOne}{\probFlipFrom[0]}
\newcommand{\probFlipOneToZero}{\probFlipFrom[1]}
\newcommand{\estProbFlipZeroToOne}{\est[p]_0}
\newcommand{\estProbFlipOneToZero}{\est[p]_1}
\newcommand{\Y}{\mathcal{Y}}
\newcommand{\dist}{\rho}
\newcommand{\classifier}{\phi}
\newcommand{\oracle}{\classifier_*}
\newcommand{\sample}{\mathcal{D}}
\newcommand{\Xsample}{\mathbf{X}}
\newcommand{\Zsample}{\mathbf{Z}}
\newcommand{\conditionalExpectKNNest}[1][x]{\tilde{#1}_{n,k}}
\newcommand{\measureSmoothnessExponent}{\lambda}
\newcommand{\suppMarginalDistribution}{\X_{\marginalDistribution}}
\newcommand{\measureSmoothnessConstant}{\omega}
\newcommand{\lepskiChoiceOfKBase}[1][k]{\hat{k}_{n}^{\delta}(x)}
\newcommand{\lepskiChoiceOfK}[1][k]{\hat{k}_{n}^{\delta}}
\newcommand{\KNNOptimalEstBase}[3]{\widehat{#1}_{#2,#3}}
\newcommand{\empiricalMaximumBase}[2]{{\widehat{M}_{#2}\left(#1\right)}}
\newcommand{\empiricalMaximum}[1][f]{\empiricalMaximumBase{#1}{n,k}}
\newcommand{\marginExponent}{\alpha}
\newcommand{\marginConstant}{C_{\marginExponent}}
\newcommand{\XFocusPoint}{X}
\newcommand{\excessRisk}{\mathcal{E}}
\icmltitlerunning{
Fast Rates for a kNN Classifier Robust to Unknown Asymmetric Label Noise
}
\begin{document}

\twocolumn[
\icmltitle
{Fast Rates for a kNN Classifier Robust to Unknown Asymmetric Label Noise}



\icmlsetsymbol{equal}{*}

\begin{icmlauthorlist}
\icmlauthor{Henry W. J. Reeve}{to}
\icmlauthor{Ata Kab\'an}{to}
\end{icmlauthorlist}

\icmlaffiliation{to}{University of Birmingham, UK}
\icmlcorrespondingauthor{Henry W. J. Reeve}{henrywjreeve@gmail.com}

\icmlkeywords{Label noise, k nearest neighbours, fast rates, non-parametric, metric spaces}

\vskip 0.3in
]



\printAffiliationsAndNotice{}  

\begin{abstract}
We consider classification in the presence of class-dependent asymmetric label noise with unknown noise probabilities. In this setting, identifiability conditions are known, but additional assumptions were shown to be required for finite sample rates, and so far only the parametric rate has been obtained. Assuming these identifiability conditions, together with a measure-smoothness condition on the regression function and Tsybakov’s margin condition, we show that the Robust kNN classifier of Gao et al. attains, the mini-max optimal rates of the \emph{noise-free setting}, up to a log factor, even when trained on data with unknown asymmetric label noise. Hence, our results provide a solid theoretical backing for this empirically successful algorithm. By contrast the standard kNN is not even consistent in the setting of asymmetric label noise. A key idea in our analysis is a simple kNN based method for estimating the maximum of a function that requires far less assumptions than existing mode estimators do, and which may be of independent interest for noise proportion estimation and randomised optimisation problems.
\end{abstract}

\section{Introduction}\label{introductionSec}
Label noise is a pervasive issue in real-world classification tasks,
as perfectly accurate labels are often very costly, and sometimes impossible, to produce \cite{natarajan2018,cannings2018,blanchard2016,Frenay1}.

We consider asymmetric label noise with unknown class-conditional noise probabilities -- that is, the labels we observe have randomly flipped in some proportion that depends on the class. This type of noise 
is both realistic and amenable to analysis \cite{blanchard2016,natarajan2018}. In this setting the classical kNN algorithm is no longer consistent (see Section \ref{inconsistencySec}). Most existing theoretical work in this direction assumes that the noise probabilities are known in advance by the learner \cite{natarajan2013learning}, 
at least approximately \cite{natarajan2018}. However, in many situations such knowledge is not available, for instance in positive unlabelled (PU) learning \cite{ElkanNoto} one may regard unlabelled data as a class of negative examples contaminated with positives in an unknown proportion. Other examples include the problem of nucleur particle classification discussed by \cite{blanchard2016}. That work also established identifiability conditions sufficient for recovering unknown noise probabilities from corrupted data. 

\citet{blanchard2010semi} proved that the identifiability conditions are insufficient to obtain finite sample convergence rates. Consequently,  \citet{scott2013classification} introduced additional conditions external to the classification task with which it is possible to obtain the parametric rate (of order $n^{-1/2}$ where $n$ is the sample size) \cite{blanchard2016}. To the best of our knowledge it is unknown if faster rates are possible with unknown asymmetric label noise.

Here we answer this question in the affirmative by analysing an existing Robust kNN classifier \cite{Gao}. Previously, \citet{Gao} conducted a comprehensive empirical study which demonstrates that the Robust kNN, introduced therein, typically outperforms a range of competitors for classification problems with asymmetric label noise. \citet{Gao} also proved the consistency of their method, but only under the restrictive assumption of prior knowledge of the label noise probabilities. 

We prove that the Robust kNN classifier attains fast rates for classification problems  in a flexible non-parametric setting with unknown asymmetric label noise. More precisely, we work under 
a measure-smoothness condition on the regression function, introduced in recent analyses of kNN in the noise-free setting \cite{chaudhuri2014rates}, termed the `modified Lipschitz' condition in \cite{doring2018}, as well as Tsybakov's margin condition. We assume in addition conditions equivalent to those of label noise identifiability \cite{blanchard2016, menon2015learning}. We show that the Robust kNN introduced by \cite{Gao} attains, up to a log factor, the known minimax optimal fast rate of the label noise free setting -- despite the presence of unknown asymmetric label noise.


\section{Problem Setup}\label{problemStatementSec}
Suppose we have a feature space $\X$ with a metric $\dist$ and a set of labels $\Y = \{0,1\}$. Let $\Prob$ be a fixed but unknown distribution on $\X \times \Y$.  Our goal is to learn a classifier $\classifier:\X \rightarrow \Y$ which minimises the risk
\begin{align*}
\risk\left(\classifier\right):= \E\left[ \classifier(X) \neq Y \right].
\end{align*}
Our data will be generated by a \emph{corrupted} distribution $\noisy[\Prob]$ on $\X\times \Y$ with asymmetric label noise, so there exist probabilities $\probFlipZeroToOne, \probFlipOneToZero \in (0,1)$ with $\probFlipZeroToOne+\probFlipOneToZero<1$ such that random pairs $(X,\noisy[Y])\sim \noisy[\Prob]$ are generated by $(X,Y) \sim \Prob$ and $\noisy[Y]\neq Y$ with probability $\probFlipFrom[Y]$ and $\noisy[Y]=Y$ otherwise, i.e. $\probFlipZeroToOne = \noisy[\Prob][\noisy[Y]=1|Y=0]$ and $\probFlipOneToZero = \noisy[\Prob][\noisy[Y]=0|Y=1]$. We have access to a data set $\noisySample = \{ (X_i,\noisy[Y]_i)\}_{i \in [n]}$ only, consisting of i.i.d. pairs generated from the corrupted distribution $(X_i,\noisy[Y]_i) \sim \noisy[\Prob]$.

We let $\marginalDistribution$ denote the  marginal distribution over the features i.e. $\marginalDistribution(A) = \Prob\left[ X \in A \right]$ for Borel sets $A \subseteq \X$, and let $\regressionFunction:\X\rightarrow [0,1]$ denote the regression function i.e. $\regressionFunction(x) = \Prob[ Y=1 | X=x]$. Further, let $\suppMarginalDistribution\subseteq \X$ denote the support of the measure $\marginalDistribution$. It follows from the assumption of feature independent label noise that the corrupted distribution $\noisy[\Prob]$ has the same marginal distribution as $\Prob$ i.e. $\noisy[\Prob]\left[ X \in A \right]= \marginalDistribution\left(A\right)$ for $A \subseteq \X$. Denote by $\noisy[\regressionFunction]:\X\rightarrow [0,1]$ the corrupted regression function $\noisy[\regressionFunction](x)  = \noisy[\Prob][\noisy[Y]=1|X=x]$. As observed in 
\cite{menon2015learning}, $\noisy[\regressionFunction]$ and $\regressionFunction$ are related by
{\small
\begin{align}\label{noisyRegInTermsOfTrueRegEq}
\noisy[\regressionFunction](x) &= \left(1-\probFlipOneToZero\right) \cdot \Prob\left[Y=1|X=x\right] + \probFlipZeroToOne \cdot \Prob\left[Y=0|X=x\right]\nonumber\\
&=\left(1-\probFlipZeroToOne-\probFlipOneToZero\right) \cdot \regressionFunction(x)+\probFlipZeroToOne.
\end{align}
}
We shall use this connection to provide a label noise robust plug-in classifier. 

\section{Approach -- roadmap}\label{plugInCorruptedRegressionFunctionSec}
The `plug-in' classification method is inspired by the fact that the mapping $\oracle:\X\rightarrow \Y$ for $x \in \X$ defined by $\oracle(x) = \one\left\lbrace \regressionFunction(x)\geq 1/2\right\rbrace$ is a Bayes classifier and minimises the risk $\risk(\classifier)$ over all measurable classifiers $\classifier:\X\rightarrow\Y$. The approach is to first produce an estimate $\est[\regressionFunction]:\X\rightarrow [0,1]$ of the regression function $\regressionFunction$ and then take $\est[\classifier](x):=\one\left\lbrace \est[\regressionFunction](x)\geq 1/2\right\rbrace$. To apply this method in the label-noise setting we must first give a method for constructing an estimate $\est[\regressionFunction]$ based upon the corrupted sample $\noisy[\sample]$. By eq. (\ref{noisyRegInTermsOfTrueRegEq}) for each $x \in \X$ we have
\begin{align}\label{trueRegInTermsOfNoisyRegEq}
\regressionFunction(x)=\left(1-\probFlipZeroToOne-\probFlipOneToZero\right)^{-1}\cdot\left(\noisy[\regressionFunction](x)-\probFlipZeroToOne\right).
\end{align}
However, all quantities on the RHS are unknown. Our strategy is to decompose the problem under mild conditions, so that we can plug in estimates. The following simple lemma makes this precise.

\begin{lemma}\label{elementaryRatioLemmaForCorruptedRegressionFunctionEstimation} 
 Let $\noisy[{\est[\regressionFunction]}]:\X \rightarrow [0,1]$ be an estimate of $\noisyRegressionFunction$ and define $\est[\regressionFunction] :\X \rightarrow [0,1]$ by $\est[\regressionFunction](x):= \left(\noisy[{\est[\regressionFunction]}](x)-\estProbFlipZeroToOne\right)/\left(1-\estProbFlipZeroToOne-\estProbFlipOneToZero\right)$.
Suppose that $\probFlipZeroToOne+\probFlipOneToZero<1$, and 
$\estProbFlipZeroToOne, \estProbFlipOneToZero \in \left[0,1\right)$ with $\estProbFlipZeroToOne+\estProbFlipOneToZero<1$. Suppose further that $\max\left\lbrace\left| \estProbFlipZeroToOne-\probFlipZeroToOne \right|, \left| \estProbFlipOneToZero-\probFlipOneToZero \right| \right\rbrace  \leq \left(1-\probFlipZeroToOne-\probFlipOneToZero\right)/4$. Then for all $x \in \X$ we have
\begin{align*}
&\left| \est[\regressionFunction](x)-\regressionFunction(x)\right| \leq  \dots \\
&8\cdot \frac{\max\left\lbrace \left| \estNoisyRegressionFunction(x) - \noisyRegressionFunction(x)\right|, \left| \estProbFlipZeroToOne-\probFlipZeroToOne \right|, \left| \estProbFlipOneToZero-\probFlipOneToZero \right| \right\rbrace}
{1-\probFlipZeroToOne-\probFlipOneToZero}.
\end{align*}
\end{lemma}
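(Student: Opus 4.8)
The plan is to prove the inequality by a direct perturbation argument for a ratio, since both functions are explicit quotients. Write $s = 1-\probFlipZeroToOne-\probFlipOneToZero$ and $\hat{s} = 1-\estProbFlipZeroToOne-\estProbFlipOneToZero$ for the two normalisers, so that by eq. (\ref{trueRegInTermsOfNoisyRegEq}) we have $\regressionFunction(x) = \bigl(\noisyRegressionFunction(x)-\probFlipZeroToOne\bigr)/s$, while by definition $\est[\regressionFunction](x) = \bigl(\estNoisyRegressionFunction(x)-\estProbFlipZeroToOne\bigr)/\hat{s}$. First I would control the estimated normaliser from below: since $|\hat{s}-s| = \bigl|(\estProbFlipZeroToOne-\probFlipZeroToOne)+(\estProbFlipOneToZero-\probFlipOneToZero)\bigr| \le |\estProbFlipZeroToOne-\probFlipZeroToOne| + |\estProbFlipOneToZero-\probFlipOneToZero| \le s/2$ by the hypothesis $\max\{\cdot,\cdot\}\le s/4$, we obtain $\hat{s} \ge s/2 > 0$, hence $1/\hat{s} \le 2/s$. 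This lower bound on $\hat{s}$ is the key quantitative input, as it prevents the ratio from blowing up.

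Next I would split the error into a numerator part and a denominator part via the telescoping identity
\[
\est[\regressionFunction](x)-\regressionFunction(x) = \frac{\bigl(\estNoisyRegressionFunction(x)-\noisyRegressionFunction(x)\bigr) - \bigl(\estProbFlipZeroToOne-\probFlipZeroToOne\bigr)}{\hat{s}} + \bigl(\noisyRegressionFunction(x)-\probFlipZeroToOne\bigr)\left(\frac{1}{\hat{s}}-\frac{1}{s}\right).
\]
In the second term I would substitute $\noisyRegressionFunction(x)-\probFlipZeroToOne = s\,\regressionFunction(x)$, so that it collapses to $\regressionFunction(x)(s-\hat{s})/\hat{s}$, and then use $|\regressionFunction(x)|\le 1$ since $\regressionFunction$ takes values in $[0,1]$. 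Combining this with $|s-\hat{s}| \le |\estProbFlipZeroToOne-\probFlipZeroToOne| + |\estProbFlipOneToZero-\probFlipOneToZero|$ and $1/\hat{s}\le 2/s$, the triangle inequality yields
\[
\bigl|\est[\regressionFunction](x)-\regressionFunction(x)\bigr| \le \frac{2}{s}\Bigl(\bigl|\estNoisyRegressionFunction(x)-\noisyRegressionFunction(x)\bigr| + 2\bigl|\estProbFlipZeroToOne-\probFlipZeroToOne\bigr| + \bigl|\estProbFlipOneToZero-\probFlipOneToZero\bigr|\Bigr).
\]
Bounding each of the four summands by the maximum $M$ of the three error quantities gives a factor $4M$, and hence $\bigl|\est[\regressionFunction](x)-\regressionFunction(x)\bigr| \le 8M/s$, which is precisely the claimed bound with constant $8$.

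There is no deep obstacle here; the argument is a routine stability estimate for a quotient, and the constant $8$ falls out exactly from the two uses of $1/\hat{s}\le 2/s$ and the bookkeeping of the numerator. The only genuinely delicate point is the coupling of the two error sources through the shared but misestimated normaliser: one must resist bounding $1/\hat{s}$ naively, since $\hat{s}$ could in principle be tiny, and it is exactly the hypothesis $\max\{|\estProbFlipZeroToOne-\probFlipZeroToOne|,|\estProbFlipOneToZero-\probFlipOneToZero|\}\le s/4$ that rules this out by forcing $\hat{s}\ge s/2$. A secondary subtlety is that the raw quotient defining $\est[\regressionFunction]$ need not itself lie in $[0,1]$; if it is truncated to $[0,1]$ (as the stated codomain $\est[\regressionFunction]:\X\rightarrow[0,1]$ suggests), then since the target $\regressionFunction(x)$ already lies in $[0,1]$, truncation can only decrease the distance to $\regressionFunction(x)$, so the bound is preserved without any additional work.
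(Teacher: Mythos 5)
Your proposal is correct and follows essentially the same route as the paper: the paper proves the abstract quotient stability estimate $\left|\hat{a}/\hat{b}-a/b\right|\leq \frac{4}{b}\max\{|\hat{a}-a|,|\hat{b}-b|\}$ via the identity $\frac{\hat{a}}{\hat{b}}-\frac{a}{b}=\frac{1}{\hat{b}}\bigl[(\hat{a}-a)+\frac{a}{b}(b-\hat{b})\bigr]$, using exactly your two key inputs --- the normaliser lower bound $\hat{b}\geq b/2$ forced by the hypothesis, and $a/b=\regressionFunction(x)\in[0,1]$ --- and then substitutes the specific numerators and denominators to obtain the constant $8$, just as your concrete telescoping and bookkeeping of the four summands does. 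Your closing remark on truncation to $[0,1]$ is a small point of additional care the paper leaves implicit, but it does not change the argument.
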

\begin{proof}The lemma follows from eq. (\ref{noisyRegInTermsOfTrueRegEq}) by a straightforward manipulation. See Appendix \ref{proofOfelementaryRatioLemmaForCorruptedRegressionFunctionEstimationAppendix} for details.
\end{proof}
We note that the conditions in Lemma \ref{elementaryRatioLemmaForCorruptedRegressionFunctionEstimation} that involve estimates may be ensured by access to a sufficiently large sample, and are therefore not restrictive. 

Consequently, we shall obtain $\hat{\eta}(x)$ in two steps, summarised in the plug-in template Algorithm \ref{classificationBasedOnCorruptedRegressionFunctionAlgo}.
First we construct an estimator $\noisy[{\est[\regressionFunction]}]$ for the corrupted regression function $\noisy[\regressionFunction]$ based upon the corrupted sample $\noisy[\sample]$ using supervised regression methods. The key remaining challenge is then to obtain estimates $\estProbFlipZeroToOne$ and $\estProbFlipOneToZero$ for $\probFlipZeroToOne$ and $\probFlipOneToZero$, respectively. The latter is known to be impossible without further assumptions (see Section 4 in \cite{scott2013classification}). 
\begin{algorithm}[htbp]
 {\caption{Plug-in classification with label noise \label{classificationBasedOnCorruptedRegressionFunctionAlgo}}}
{ 
 \begin{enumerate}
     \item Compute an estimate $\noisy[{\est[\regressionFunction]}]$ of the corrupted regression function $\noisy[\regressionFunction]$ based on $\noisy[\sample]$;
     \item Compute $\estProbFlipZeroToOne$ and $\estProbFlipOneToZero$ by estimating the extrema of $\noisy[\hat{\regressionFunction}]$;
     \item Let $\est[\classifier](x):= \one\left\lbrace \noisy[{\est[\regressionFunction]}](x) \geq 1/2 \cdot \left(1+\estProbFlipZeroToOne-\estProbFlipOneToZero\right) \right\rbrace$.
     \end{enumerate}
 }
\end{algorithm}
Next, we discuss the assumptions that we employ for the remainder of this work.

\subsection{Main Assumptions and Relation to Previous Work
}\label{assumptions}

We employ two kinds of assumptions: (i) Assumptions \ref{majorityAssumption} and \ref{rangeAssumption} represent identifiability conditions for asymmetric label noise; (ii) Assumptions \ref{regressionFunctionIsUniformlySmoothAssumption} and \ref{marginAssumption} are conditions under which minimax optimal fast rates are known in the noise-free setting. We now briefly explain each of these, which also serves to place our forthcoming analysis into the context of previous work.

We already made use of the following in Lemma \ref{elementaryRatioLemmaForCorruptedRegressionFunctionEstimation}:
\begin{assumption}[Most labels are correct]\label{majorityAssumption}  $\probFlipZeroToOne+\probFlipOneToZero<1$.
\end{assumption}
\begin{assumption}[Range assumption]\label{rangeAssumption} We have $\inf_{x \in \suppMarginalDistribution}\left\lbrace \regressionFunction(x) \right\rbrace=0$ and $\sup_{x \in \suppMarginalDistribution}\left\lbrace \regressionFunction(x) \right\rbrace=1$.
\end{assumption}
Assumption \ref{rangeAssumption} was introduced by \citet{menon2015learning} who showed it to be equivalent to the `mutual irreducibility' condition given in \cite{scott2013classification,blanchard2016}. The above form will be more directly useful, since 
from Assumption \ref{rangeAssumption} and eq. (\ref{noisyRegInTermsOfTrueRegEq}) it follows that $\inf_{x \in \suppMarginalDistribution}\left\lbrace \noisy[\regressionFunction](x) \right\rbrace=\probFlipZeroToOne$ and $\sup_{x \in \suppMarginalDistribution}\left\lbrace \noisy[\regressionFunction](x) \right\rbrace=1-\probFlipOneToZero$. Hence, we may obtain estimates $\estProbFlipZeroToOne$ and $\estProbFlipOneToZero$ by estimating the extrema of the corrupted regression function $\noisy[\regressionFunction]$. 

Recall that the above assumptions alone do not permit finite sample convergence rates; therefore \citet{scott2015rate} assumed in addition that there are positive measure balls $B_0,B_1$ in the input space such that $\forall x \in B_i$ $\regressionFunction(x)= i$, and obtained the parametric rate, i.e. of order $n^{-1/2}$.
We will not assume this, instead we now consider assumptions that have already succeeded in establishing fast rates in the noise-free setting.

The following smoothness condition with respect to the marginal distribution was first proposed by \citet{chaudhuri2014rates}, and further employed by \citet{doring2018} who also termed it the `modified Lipschitz' condition. It generalises the combination of H\"older-continuity and strong density assumptions (i.e. marginal density bounded from below) that have been prevalent in previous theoretical analyses of plug-in classifiers after \cite{audibert2007fast}.
\begin{defn}[Measure-smoothness]\label{def:GMS} 
A function $f:\X \rightarrow [0,1]$ is \emph{measure-smooth} with exponent $\measureSmoothnessExponent>0$ and constant $\measureSmoothnessConstant>0$ if for all $x_0, x_1 \in \X$ we have $
\left|f(x_0)-f(x_1)\right|\leq \measureSmoothnessConstant \cdot \marginalDistribution\left(B_{\rho(x_0,x_1)}(x_0)\right)^{\measureSmoothnessExponent}$.
\end{defn}
\begin{assumption}[Measure-smooth regression function]\label{regressionFunctionIsUniformlySmoothAssumption} The regression function $\regressionFunction$ is measure-smooth with exponent $\measureSmoothnessExponent$ and constant $\measureSmoothnessConstant$.
\end{assumption}
A sufficient condition for Assumption \ref{regressionFunctionIsUniformlySmoothAssumption} to hold with $\measureSmoothnessExponent = \beta/d$ is that $\regressionFunction$ is $\beta$-H\"{o}lder and $\marginalDistribution$ is absolutely continuous with respect to the Lebesgue measure on $[0,1]^d$ with a uniform lower bound on the density. However, Assumption \ref{regressionFunctionIsUniformlySmoothAssumption} does not require the existence of a density for $\marginalDistribution$ and also applies naturally to classification in general metric spaces, including discrete distributions \cite{doring2018}. 

The final assumption is Tsybakov's margin condition, which has become a widely used device for explaining fast rates, since it was first proposed by \citet{mammen1999}.
\begin{restatable}[Tsybakov margin condition]{assumption}{marginAssumption}\label{marginAssumption} There exists $\marginExponent\geq 0$ and $\marginConstant \geq 1$ such that for all $\xi>0$ we have
\begin{align*}
\marginalDistribution\left( \left\lbrace x \in \X: 0< \left|\regressionFunction(x)-\frac{1}{2}\right| <\xi \right\rbrace \right) \leq \marginConstant \cdot \xi^{\marginExponent}.
\end{align*}
\end{restatable}
Note that Assumption \ref{marginAssumption} always holds with $\marginExponent=0$ and $\marginConstant = 1$; hence it is not restrictive.

Under the assumptions of Tsybakov margin, and measure-smoothness of the regression function, \citet{chaudhuri2014rates} obtained, in the noise-free setting (i.e. without label noise), for the k nearest neighbor (kNN) classifier, the convergence rate of order $n^{-\frac{\measureSmoothnessExponent(\marginExponent+1)}{2\measureSmoothnessExponent+1}}$ -- which corresponds (after having made explicit the dimensional dependence) to the minimax optimal rate computed by \citet{audibert2007fast} (that is, the lower bound is over all classifiers). With these two distributional assumptions, this rate can therefore be regarded as quantifying the statistical hardness of classification in the minimax sense (see e.g. \cite{Tsybakov}) when a perfect labelling is available. It is not at all obvious whether, and how, the same rate can be achieved in the presence of unknown asymmetric label noise conditions? The aim in the remainder of this paper is to answer this question.

\subsection{Notation and tools}\label{preliminaries}
Whilst we are motivated by the estimation of $\noisy[\regressionFunction]$ we shall frame our results in a more general fashion for clarity. Suppose we have a distribution $\Prob$ on $\X \times [0,1]$ and let $f:\X\rightarrow [0,1]$ be the function $f(x):= \E\left[Z|X=x\right]$. Our goals are to estimate $f$ and its extrema based on a sample $\sample_f = \left\lbrace \left(X_i,Z_i\right)\right\rbrace_{i \in [n]}$ with $(X_i,Z_i) \sim \Prob$ generated i.i.d. 
We let $\Xsample: = \left\lbrace X_i\right\rbrace_{i \in [n]}$ and $\Zsample:=\left\lbrace Z_i\right\rbrace_{i \in [n]}$.

Given $x \in \X$ we define $\left\lbrace \tau_{n,q}(x)\right\rbrace_{q \in [n]}$ to be an enumeration of $[n]$ such that for each $q \in [n-1]$, $\rho\left(x,X_{\tau_{n,q}(x)}\right) \leq \rho\left(x,X_{\tau_{n,q+1}(x)}\right)$.
We define the $k$-nearest neighbour estimate $\KNNest[f]: \X \rightarrow [0,1]$ of $f$ by 
\begin{align}\label{kNNRegFuncDef}
\KNNest[f](x):=\frac{1}{k} \cdot \sum_{q \in [k]}Z_{\tau_{n,q}(x)}.
\end{align}
Given a point $x \in \X$ and $r>0$ we let $B_r(x)$ denote the open metric ball of radius $r$, centered at $x$. 
It will be useful to give a high probability bound on the measure of an open metric ball centered at a given point with random radius equal to the distance of its $k$-th nearest neighbour.
\begin{restatable}{lemma}{MeasureOfBallLemma}\label{MeasureOfBallLemma}
Take $x \in \X, k \in [n]$ and $\zeta\geq 0$. 
Then, 
{\small
\begin{align*}
\Prob^n\left[ \marginalDistribution\left(B_{\rho\left(x,X_{\tau_{n,k}(x)}\right)}(x)\right) > 
\frac{(1+\zeta)k}{n} 
\right] \leq  
e^{-k(\zeta-\log(1+\zeta))}.
 \end{align*}}
\end{restatable}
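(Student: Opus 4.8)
The plan is to reduce the statement about the random \emph{measure} of the $k$-nearest-neighbour ball to a statement about a \emph{count} of sample points, which is a binomial random variable, and then to apply a Chernoff bound for its lower tail. First I would introduce the critical radius
$$r_t := \sup\left\{ r \geq 0 : \marginalDistribution\left(B_r(x)\right) \leq t \right\}, \qquad t := \frac{(1+\zeta)k}{n}.$$
If $t \geq 1$ the event has probability $0$ and the inequality is trivial, since $\zeta - \log(1+\zeta) \geq 0$ for $\zeta \geq 0$; so I may assume $t < 1$, whence $r_t$ is finite. The map $r \mapsto \marginalDistribution(B_r(x))$ is non-decreasing, and since the balls are open it is left-continuous in $r$, so $\marginalDistribution(B_{r_t}(x)) \leq t$, whereas for every $r > r_t$ we have $\marginalDistribution(B_r(x)) > t$ by definition of the supremum.

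Next I would establish the key event identity
$$\left\{ \marginalDistribution\left(B_{\rho\left(x,X_{\tau_{n,k}(x)}\right)}(x)\right) > t\right\} = \left\{ \rho\left(x,X_{\tau_{n,k}(x)}\right) > r_t\right\} = \left\{ N \leq k-1\right\},$$
where $N := \sum_{i \in [n]} \one\left\{\rho(x,X_i) \leq r_t\right\}$ counts the sample points lying in the \emph{closed} ball $\bar{B}_{r_t}(x)$. The first equality uses monotonicity together with the two one-sided bounds just noted; the second holds because the $k$-th nearest-neighbour distance exceeds $r_t$ precisely when fewer than $k$ of the points lie within distance $r_t$ of $x$.

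The crux of the argument, which I would flag as the main obstacle, is the measure-theoretic bookkeeping needed to ensure that $\left\{N \leq k-1\right\}$ is genuinely a \emph{lower}-tail event. The variable $N$ is binomial with success probability $p = \marginalDistribution\left(\bar{B}_{r_t}(x)\right)$ attached to the closed ball, so I need $p \geq t$, i.e.\ $\E[N] = np \geq (1+\zeta)k$. This is where the distinction between open and closed balls, and a possible atom on the critical sphere, matters: it follows from continuity of the finite measure $\marginalDistribution$ from above. Letting $r \downarrow r_t$, the open balls $B_r(x)$ decrease to $\cap_{r > r_t} B_r(x) = \bar{B}_{r_t}(x)$, and since each satisfies $\marginalDistribution(B_r(x)) > t$ we obtain $\marginalDistribution\left(\bar{B}_{r_t}(x)\right) \geq t$.

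Finally, with $N \sim \mathrm{Binomial}(n,p)$ and $np \geq (1+\zeta)k$, I would bound the lower tail by the exponential Markov inequality: for any $s > 0$,
$$\Prob\left[N \leq k-1\right] \leq e^{sk}\,\E\left[e^{-sN}\right] \leq \exp\left( k\left[ s - (1+\zeta)\left(1-e^{-s}\right)\right]\right),$$
using $\E\left[e^{-sN}\right] = \left(1 - p\left(1-e^{-s}\right)\right)^n \leq e^{-np\left(1-e^{-s}\right)}$ followed by $np \geq (1+\zeta)k$ and $1 - e^{-s} \geq 0$. Optimising the exponent $s - (1+\zeta)(1-e^{-s})$ over $s$ yields the minimiser $s = \log(1+\zeta) \geq 0$ and the optimal value $-\left(\zeta - \log(1+\zeta)\right)$, giving exactly the claimed bound $e^{-k(\zeta - \log(1+\zeta))}$. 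The tail bound itself is a routine Chernoff computation; the only genuinely delicate point is the reduction and the open-versus-closed-ball argument in the preceding paragraph.
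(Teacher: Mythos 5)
Your proof is correct and follows essentially the same route as the paper: both reduce the measure of the $k$-NN ball to a binomial count of sample points at a critical radius and apply a multiplicative Chernoff lower-tail bound, handling the open-versus-closed-ball subtlety via continuity of measure. The remaining differences are cosmetic --- you define the critical radius as a supremum and work directly with the closed ball (continuity from above), while the paper defines $r_p$ as an infimum, bounds $\Prob^n\left[\rho\left(x,X_{\tau_{n,k}(x)}\right)\geq r\right]$ for every $r>r_p(x)$ and passes to the limit, citing Mitzenmacher--Upfal for the Chernoff step that you derive by hand; the exponents agree after the paper's substitution $\epsilon=\zeta/(1+\zeta)$, $p=(1+\zeta)k/n$.
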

A bound of this form appears in \cite{Biau} (Sec. 1.2) for the special case where the marginal distribution has a continuous distribution function. Their proof relies of the fact that, in this special case  $\mu(B_{\rho\left(x,X_{\tau_{n,k}(x)}\right)}(x))$ follows the distribution of the $k$-th uniform order statistic whose properties are well studied. However, since we consider a general metric space setting and do not assume a continuous distribution function, below we show from first principles that this bound is still valid, by exploiting the continuity properties of measures.
\begin{proof}
For any $x\in \X$ and $p\in[0,1]$ we define (following \citet{chaudhuri2014rates}) the smallest radius for which the open ball centered at $x$ has probability at least $p$:
\begin{align*}
r_p(x) = \inf\left\lbrace r>0: \marginalDistribution\left(B_r(x)\right)\geq p\right\rbrace.
\end{align*}
Take $r>r_p(x)$, so $\mu\left(B_r(x)\right) \geq p$. 
Note that $\rho\left(x,X_{\tau_{n,k}(x)}\right) \geq r$ if and only if $\sum_{i\in [n]}\mathds{1}_{\{X_i \in B_r(x)\}}<k$. 
Moreover, taking 
\begin{align*}
\tilde{p}= \frac{1}{n}\sum_{i\in [n]}\E\left[\one_{\{X_i \in B_r(x)\}}\right]=\mu\left(B_r(x)\right) \geq p,
\end{align*}
implies $k \leq (1-\epsilon) n \tilde{p}\leq n \tilde{p}$ where $\epsilon = 1-{k}/{(np)}$. Thus, by the multiplicative Chernoff bound -- Theorem 4.5 in \cite{mitzenmacher2005probability} -- we have,
{\small
\begin{align*}
\Prob^n\left[\rho\left(x,X_{\tau_{n,k}(x)}\right) \geq r\right] &=\Prob^n\left[\sum_{i\in [n]}\mathds{1}_{\{X_i \in B_r(x)\}}<k\right]\\
&\leq \Prob^n\left[\sum_{i\in [n]}\mathds{1}_{\{X_i \in B_r(x)\}}< (1-\epsilon)n\tilde{p}\right]\\
&\leq \exp(-n\tilde{p} [\epsilon + (1-\epsilon)\log(1-\epsilon)])\\
& \leq \exp(-np [\epsilon + (1-\epsilon)\log(1-\epsilon)]).
\end{align*}}
Since the above inequality holds for all $r>r_p(x)$, it follows by continuity of $\mu$ from above that we have
{\begin{align*}
&\Prob^n\left[ \rho\left(x,X_{\tau_{n,k}(x)}\right) > r_p(x)\right] \leq 
e^{-np [\epsilon + (1-\epsilon)\log(1-\epsilon)]}.
 \end{align*}}
This implies that with probability at least 
$1-\exp(-np [\epsilon + (1-\epsilon)\log(1-\epsilon)])$ we have
\begin{align*}
 \mu\left(B_{\rho\left(x,X_{\tau_{n,k}(x)}\right)}(x)\right) &\leq \mu(B_{r_p(x)})\le p,
\end{align*}
where the last inequality follows by continuity of measure from below.

Recall that $\epsilon=1-k/(np)$. To obtain the conclusion of the lemma we first note that the bound holds trivially whenever $\zeta \geq n/k - 1$ since this implies
{\small
\begin{align*}
\Prob^n&\left[ \marginalDistribution\left(B_{\rho\left(x,X_{\tau_{n,k}(x)}\right)}(x)\right) > 
\frac{(1+\zeta)k}{n} 
\right] \\ &
\leq  
\Prob^n\left[ \marginalDistribution\left(B_{\rho\left(x,X_{\tau_{n,k}(x)}\right)}(x)\right) > 
1 
\right]=0.
\end{align*}}
For $\zeta \in \left[0, n/k-1\right]$, we choose $p=(1+\zeta)k/n \in [0,1]$. Plugging into $\epsilon$ yields $\epsilon=\zeta/(1+\zeta)$, and rearranging the RHS of the probability bound completes the proof.
\end{proof}

When the centre of the metric ball is one of the data points, we have the following.
\begin{corollary}\label{MeasureOfBallCorr}
Take $k,j \in [n]$ and $\zeta>0$. 
Then, 
{\small
\begin{align*}
\Prob^n\left[ \mu(B_{\rho\left(X_j,X_{\tau_{n,k}(x)}\right)}(X_j)) > 
\frac{(1+\zeta)k}{n}
\right] \leq  
e^{-(k-1)(\zeta-\log(1+\zeta))}.
 \end{align*}}
\end{corollary}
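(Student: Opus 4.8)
The plan is to reduce the corollary to Lemma \ref{MeasureOfBallLemma} by conditioning on the data point $X_j$ and exploiting the fact that $X_j$ is its own nearest neighbour. Concretely, I would first condition on the value $X_j = x$. Since the sample is i.i.d., the remaining $n-1$ points $\{X_i\}_{i \in [n]\setminus\{j\}}$ are then i.i.d. draws from $\marginalDistribution$, independent of $x$. The crucial observation is that, because $\rho(X_j,X_j)=0$, the point $X_j$ always lies among the $k$ nearest neighbours of itself; hence the $k$-th nearest neighbour distance $\rho(X_j,X_{\tau_{n,k}(X_j)})$ coincides with the $(k-1)$-th nearest neighbour distance of $x$ computed over the $n-1$ remaining points (and is no larger than this if several points coincide at distance $0$). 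This turns a statement about $(n,k)$ at the data point $X_j$ into a statement about $(n-1,k-1)$ at a fixed point $x$, which is exactly the regime covered by Lemma \ref{MeasureOfBallLemma}.

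The steps I would carry out are as follows. First, apply Lemma \ref{MeasureOfBallLemma} with sample size $n-1$, neighbour index $k-1$, and at the fixed point $x$, to obtain
\begin{align*}
\Prob^{n-1}\left[ \marginalDistribution\left(B_{\rho(x,X_{\tau_{n-1,k-1}(x)})}(x)\right) > \frac{(1+\zeta)(k-1)}{n-1}\right] \leq e^{-(k-1)(\zeta - \log(1+\zeta))}.
\end{align*}
Second, compare the two thresholds: since $k \leq n$ we have $\frac{k-1}{n-1} \leq \frac{k}{n}$, so the event $\{\marginalDistribution(\cdot) > (1+\zeta)k/n\}$ is contained in $\{\marginalDistribution(\cdot) > (1+\zeta)(k-1)/(n-1)\}$ and therefore has no larger probability. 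Third, because the bound on the right-hand side is uniform in $x$, the conditional estimate holds for every value of $X_j$, and integrating over the distribution of $X_j$ yields the unconditional bound claimed in the corollary.

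The main obstacle I anticipate is making the self-inclusion reduction fully rigorous, in particular handling ties at distance $0$ (when $\marginalDistribution$ has atoms and several sample points coincide with $X_j$) and the degenerate case $k=1$. For ties, one notes that extra points at distance $0$ only decrease the radius $\rho(X_j,X_{\tau_{n,k}(X_j)})$, hence shrink the ball and its measure, so the claimed upper bound is preserved. For $k=1$ the $1$-nearest neighbour of $X_j$ is $X_j$ itself, the radius is $0$, the open ball is empty, and the probability on the left is $0 \leq e^0$, so the statement holds trivially. Once these edge cases are dispatched, the argument is a clean application of Lemma \ref{MeasureOfBallLemma} with the shifted parameters $(n-1,k-1)$.
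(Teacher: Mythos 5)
Your proposal is correct and is essentially the paper's own proof: fix (condition on) $X_j$, apply Lemma \ref{MeasureOfBallLemma} to the $(k-1)$-th nearest neighbour in the remaining sample of size $n-1$, use $\frac{k-1}{n-1} \leq \frac{k}{n}$ to compare thresholds, and integrate over $X_j$, which leaves the right-hand side unchanged. Your additional care with ties at distance zero and the degenerate case $k=1$ only makes explicit what the paper's one-line argument leaves implicit.
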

Since this is a bound also for the ball with a non-random centre point, we may use it in both cases.
\begin{proof} 
Fix $X_j$ and apply Lemma \ref{MeasureOfBallLemma} to the $k-1$-th nearest neighbour in the remaining sample of size $n-1$. Expectation w.r.t. $X_j$ on both sides leaves the RHS unchanged. Then use that $\frac{k-1}{n-1} < \frac{k}{n}$.
\end{proof}

\section{Results}
We are now in a position to follow through the plan of our analysis. The following three subsections correspond directly to the three steps of the algorithm template (see Algorithm \ref{classificationBasedOnCorruptedRegressionFunctionAlgo}) through a kNN based classification rule.

\subsection{Pointwise function estimation}\label{KNNFuncEstSec}
As a first step we deal with point-wise estimation of the corrupted regression function, which we approach as a kNN regression task \cite{Kpotufe,Jiang}. However, for our purposes we require a bound that holds both for non-random points $x\in\X_{\mu}$ and for feature vectors $X_j$ occurring in the data, for reasons that will become clear in the subsequent Section \ref{KNNMaxEstBoundedSubSec} where we will need to estimate the maximum of the function. 
\begin{theorem}[Pointwise estimation bound]\label{KNNPointwiseBoundFixedKThm} Suppose that $f$ satisfies measure-smoothness with exponent $\measureSmoothnessExponent>0$ and constant $\measureSmoothnessConstant$. Take $n\in \N$, $\delta \in \left(0,1\right)$, $k \in \N\cap [4 \log(3/\delta)+1, n/2]$ and suppose that $\sample_f$ is generated i.i.d. from $\Prob$. Suppose that $\XFocusPoint$ is either a fixed point $x \in \suppMarginalDistribution$ or $X_j$ for some fixed $j \in [n]$. The following bound holds with probability at least $1-\delta$ over $\sample_f$
\begin{align*}
\left|\KNNest[f](\XFocusPoint)-f(\XFocusPoint)\right| \leq  \sqrt{\frac{\log(3/\delta)}{2k}}+\omega \cdot \left(\frac{2k}{n}\right)^{\measureSmoothnessExponent}.
\end{align*}

\end{theorem}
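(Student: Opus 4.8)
The plan is to split $\KNNest[f](\XFocusPoint) - f(\XFocusPoint)$ into a stochastic (variance) part and a features-measurable (bias) part through the conditional mean
\[
\conditionalExpectKNNest[f](\XFocusPoint) \defeq \E\left[\KNNest[f](\XFocusPoint) \mid \Xsample\right] = \frac{1}{k}\sum_{q \in [k]} f\left(X_{\tau_{n,q}(\XFocusPoint)}\right),
\]
where the second equality uses $f(x)=\E[Z\mid X=x]$ together with the fact that the neighbour indices $\tau_{n,q}(\XFocusPoint)$ are measurable with respect to $\Xsample$. By the triangle inequality it then suffices to control $|\KNNest[f](\XFocusPoint) - \conditionalExpectKNNest[f](\XFocusPoint)|$ and $|\conditionalExpectKNNest[f](\XFocusPoint) - f(\XFocusPoint)|$ separately and combine them by a union bound; I would allocate failure probability $2\delta/3$ to the first and $\delta/3$ to the second, which is what produces the $\log(3/\delta)$ appearing in the statement.

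For the variance term I would condition on $\Xsample$. Given the features, the selected labels $\{Z_{\tau_{n,q}(\XFocusPoint)}\}_{q\in[k]}$ are independent, $[0,1]$-valued, with conditional means $f(X_{\tau_{n,q}(\XFocusPoint)})$, so Hoeffding's inequality gives $\Prob[\,|\KNNest[f](\XFocusPoint) - \conditionalExpectKNNest[f](\XFocusPoint)| > t \mid \Xsample] \le 2e^{-2kt^2}$ for every realisation of $\Xsample$. Choosing $t = \sqrt{\log(3/\delta)/(2k)}$ makes this at most $2\delta/3$, and since the bound is uniform over $\Xsample$ it survives taking expectation, giving the first summand unconditionally. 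For the bias term I would invoke measure-smoothness: since $\rho(\XFocusPoint, X_{\tau_{n,q}(\XFocusPoint)}) \le \rho(\XFocusPoint, X_{\tau_{n,k}(\XFocusPoint)})$ for $q\le k$, monotonicity of the ball measure yields
\[
\left|\conditionalExpectKNNest[f](\XFocusPoint) - f(\XFocusPoint)\right| \le \frac{1}{k}\sum_{q\in[k]}\left|f(X_{\tau_{n,q}(\XFocusPoint)}) - f(\XFocusPoint)\right| \le \measureSmoothnessConstant \cdot \marginalDistribution\left(B_{\rho(\XFocusPoint, X_{\tau_{n,k}(\XFocusPoint)})}(\XFocusPoint)\right)^{\measureSmoothnessExponent}.
\]
It then remains to bound the $k$-nearest-neighbour ball measure. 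Taking $\zeta=1$ in Lemma \ref{MeasureOfBallLemma} (for a fixed $x\in\suppMarginalDistribution$) or in Corollary \ref{MeasureOfBallCorr} (for $\XFocusPoint=X_j$) shows $\marginalDistribution(B_{\rho(\XFocusPoint, X_{\tau_{n,k}(\XFocusPoint)})}(\XFocusPoint)) \le 2k/n$ with failure probability at most $e^{-(k-1)(1-\log 2)}$, which is at most $\delta/3$ precisely when $k-1 \ge \log(3/\delta)/(1-\log 2)$; the hypothesis $k \ge 4\log(3/\delta)+1$ guarantees this since $1/(1-\log 2) < 4$. Raising to the power $\measureSmoothnessExponent$ delivers the second summand, and the union bound over the two events gives total failure at most $\delta$.

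I expect the main obstacle to be the bookkeeping that lets a single statement cover both the fixed-point and the data-point case. When $\XFocusPoint=X_j$ the point is its own nearest neighbour at distance zero, so the $k$-nearest-neighbour ball effectively sees only $k-1$ independent competitors among the remaining $n-1$ samples; this is exactly why Corollary \ref{MeasureOfBallCorr} carries the exponent $(k-1)$ rather than $k$, and why the hypothesis demands the extra $+1$ in $k \ge 4\log(3/\delta)+1$, absorbing the lost point so that the same $\delta/3$ tail holds in both regimes. The remaining care is to confirm that conditioning on $\Xsample$ legitimately freezes the neighbour identities while leaving the labels conditionally independent, so that the conditional Hoeffding step and the unconditional union bound compose correctly.
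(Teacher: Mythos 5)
Your proposal is correct and follows essentially the same route as the paper: the same decomposition through the conditional mean $\conditionalExpectKNNest[f](\XFocusPoint)$, Hoeffding/Chernoff conditional on $\Xsample$ for the variance term with budget $2\delta/3$, and measure-smoothness combined with the $\zeta=1$ ball-measure bound (the paper packages this step as Lemma \ref{LipLemma}, invoking Corollary \ref{MeasureOfBallCorr} with its $(k-1)$ exponent in both the fixed-point and data-point cases) for the bias term with budget $\delta/3$. Your observation about why the hypothesis reads $k \geq 4\log(3/\delta)+1$ matches the paper's reasoning exactly, via $1/(1-\log 2)<4$.
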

We leave $k$ unspecified for now, but we see that a choice of $k\in{\Theta}(\measureSmoothnessConstant^{-\frac{2}{2\measureSmoothnessExponent+1}}\cdot n^{\frac{2\measureSmoothnessExponent}{2\measureSmoothnessExponent+1}})$ gives the rate $\mathcal{O}(\measureSmoothnessConstant^{\frac{1}{2\measureSmoothnessExponent+1}}\cdot n^{-\frac{\measureSmoothnessExponent}{2\measureSmoothnessExponent+1}})$.

The following lemma will be handy.
\begin{lemma}\label{LipLemma}
Let $f, X, n, k$ as in Theorem \ref{KNNPointwiseBoundFixedKThm}, 
and $q\in [k]$. 
Then for any $\delta>0$, and $n/2\ge k\ge 4\log(1/\delta)+1$, we have w.p. $1-\delta$ that
\begin{align*}
\left|f\left(X_{\tau_{n,q}(\XFocusPoint)}\right)-f(\XFocusPoint)\right| 
&\leq  \measureSmoothnessConstant \cdot \left(\frac{2k}{n}\right)^{\measureSmoothnessExponent}.
\end{align*}
\end{lemma}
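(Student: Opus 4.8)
The plan is to reduce the claim to the high-probability control of a $k$-nearest-neighbour ball measure supplied by Corollary \ref{MeasureOfBallCorr}, after a single application of measure-smoothness. First I would apply Definition \ref{def:GMS} with $x_0 = \XFocusPoint$ and $x_1 = X_{\tau_{n,q}(\XFocusPoint)}$, which gives, deterministically,
\begin{align*}
\left|f\left(X_{\tau_{n,q}(\XFocusPoint)}\right)-f(\XFocusPoint)\right| \leq \measureSmoothnessConstant \cdot \marginalDistribution\left(B_{\rho\left(\XFocusPoint,X_{\tau_{n,q}(\XFocusPoint)}\right)}(\XFocusPoint)\right)^{\measureSmoothnessExponent}.
\end{align*}
This turns the estimate into a question about the $\marginalDistribution$-measure of the ball centred at $\XFocusPoint$ whose radius is the distance to the $q$-th nearest neighbour.

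Next I would eliminate the dependence on $q$. For $q \in [k]$ the enumeration $\tau_{n,\cdot}(\XFocusPoint)$ orders points by increasing distance, so $\rho(\XFocusPoint, X_{\tau_{n,q}(\XFocusPoint)}) \leq \rho(\XFocusPoint, X_{\tau_{n,k}(\XFocusPoint)})$; the corresponding open balls are therefore nested and monotonicity of $\marginalDistribution$ gives
\begin{align*}
\marginalDistribution\left(B_{\rho\left(\XFocusPoint,X_{\tau_{n,q}(\XFocusPoint)}\right)}(\XFocusPoint)\right) \leq \marginalDistribution\left(B_{\rho\left(\XFocusPoint,X_{\tau_{n,k}(\XFocusPoint)}\right)}(\XFocusPoint)\right).
\end{align*}
Since $\measureSmoothnessExponent>0$, raising both sides to the power $\measureSmoothnessExponent$ preserves the inequality, so it suffices to bound the $k$-th nearest-neighbour ball measure, a quantity that no longer depends on $q$.

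Finally I would invoke Corollary \ref{MeasureOfBallCorr} with the choice $\zeta = 1$, which applies uniformly whether $\XFocusPoint$ is a fixed point $x \in \suppMarginalDistribution$ or a sample point $X_j$. This yields $\marginalDistribution(B_{\rho(\XFocusPoint, X_{\tau_{n,k}(\XFocusPoint)})}(\XFocusPoint)) \leq 2k/n$ with probability at least $1-e^{-(k-1)(1-\log 2)}$, and chaining the three displays produces the claimed bound $\measureSmoothnessConstant \cdot (2k/n)^{\measureSmoothnessExponent}$. The only quantitative point, which I expect to be the main (and rather mild) obstacle, is calibrating this tail against the sample-size hypothesis: because $1-\log 2 > 1/4$ and $k-1 \geq 4\log(1/\delta)$, we obtain $(k-1)(1-\log 2) \geq \log(1/\delta)$, so the failure probability is at most $\delta$. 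Everything else is a nesting and monotonicity argument together with one use of Definition \ref{def:GMS}.
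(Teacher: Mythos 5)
Your proposal is correct and matches the paper's own proof essentially step for step: one application of measure-smoothness, nesting of the $q$-th within the $k$-th nearest-neighbour ball, and Corollary \ref{MeasureOfBallCorr} with $\zeta=1$ calibrated via $1-\log 2 > 1/4$ so that $k \geq 4\log(1/\delta)+1$ forces the failure probability below $\delta$. The only cosmetic difference is that the paper runs the argument with general $\zeta \geq 0$ before specialising to $\zeta=1$, whereas you fix $\zeta=1$ from the start; nothing of substance changes.
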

\begin{proof} 
By the measure-smoothness property,
\begin{align*}
\left|f\left(X_{\tau_{n,q}(\XFocusPoint)}\right)-f(\XFocusPoint)\right| &\leq \measureSmoothnessConstant \cdot \marginalDistribution\left(B_{\rho(\XFocusPoint,X_{\tau_{n,q}(x)})}(\XFocusPoint)\right)^{\measureSmoothnessExponent} \\& \leq \measureSmoothnessConstant \cdot \marginalDistribution\left(B_{\rho(\XFocusPoint,X_{\tau_{n,k}(\XFocusPoint)})}(\XFocusPoint)\right)^{\measureSmoothnessExponent} \\ 
&\leq  \measureSmoothnessConstant \cdot \left(\frac{(1+\zeta)k}{n}\right)^{\measureSmoothnessExponent},  
\end{align*}
with probability $1-\exp(-(k-1)(\zeta-\log(1+\zeta)))$, where $\zeta\geq 0$, and the last inequality follows by Corollary \ref{MeasureOfBallCorr}. 

For simplicity we choose $\zeta=1$, whence $1/(1-\log(2)) < 4$, so for $k\geq 4\log(1/\delta)+1$ we have with probability $1-\delta$ the statement of the Lemma.
Finally, note that the measure of the ball cannot be larger than $1$, so we require $k\leq n/2$.
\end{proof}

\begin{proof} 
We shall use the notation $\conditionalExpectKNNest[f](x)=\E_{\Zsample|\Xsample}\left[\KNNest[f](x)\right] =\frac{1}{k} \cdot \sum_{q \in [k]}f\left(X_{\tau_{n,q}(x)}\right)$. 

By the triangle inequality we have 
{\small
\begin{align}\label{mainTriangleInequalityInProofOfKNNPointwiseBoundFixedKThmEq}
\left|\KNNest[f](\XFocusPoint)-f(\XFocusPoint)\right| \leq \left|\KNNest[f](\XFocusPoint)-\conditionalExpectKNNest[f](\XFocusPoint)\right|+\left|\conditionalExpectKNNest[f](\XFocusPoint)-f(\XFocusPoint)\right|. 
\end{align}}
To bound the first term, note that $\XFocusPoint$ is a deterministic function of $\Xsample=\left\lbrace X_i \right\rbrace_{i \in [n]}$. By construction we have $\KNNest[f](\XFocusPoint) = \frac{1}{k} \cdot \sum_{q \in [k]}Z_{\tau_{n,q}(\XFocusPoint)}$. Moreover, for each $q \in [k]$, $Z_{\tau_{n,q}(\XFocusPoint)}$ is a random variable in $[0,1]$ with conditional expectation (given $\Xsample$) equal to $\E_{\Zsample|\Xsample}\left[Z_{\tau_{n,q}(\XFocusPoint)}\right] = f\left(X_{\tau_{n,q}(\XFocusPoint)}\right)$, and these are independent. 
Hence, it follows from Chernoff bounds \cite{boucheron2013concentration}
that the first term is bounded with probability at least $1-2\delta/3$ over $\sample_f$, as the following
\begin{align}\label{varBoundInProofOfKNNPointwiseBoundFixedKThmEq}
\left|\KNNest[f](\XFocusPoint)-\conditionalExpectKNNest[f](\XFocusPoint)\right|\leq \sqrt{\frac{\log(3/\delta)}{2k}}.
\end{align}

To bound the second term in eq. (\ref{mainTriangleInequalityInProofOfKNNPointwiseBoundFixedKThmEq}) 
we use Lemma \ref{LipLemma} with $1-\delta/3$, so for the allowed range of values of $k$ we have
{\small
\begin{align}\label{biasBoundInProofOfKNNPointwiseBoundFixedKThmEq}
\left|\conditionalExpectKNNest[f](\XFocusPoint)-f(\XFocusPoint)\right|&\leq \frac{1}{k} \cdot \sum_{q \in [k]}\left|f\left(X_{\tau_{n,q}(\XFocusPoint)}\right)-f(\XFocusPoint)\right| \\
& \leq \measureSmoothnessConstant \cdot \left(\frac{2k}{n}\right)^{\measureSmoothnessExponent}.  
\end{align}}
Taking the union bound, eqs. (\ref{varBoundInProofOfKNNPointwiseBoundFixedKThmEq}) and (\ref{biasBoundInProofOfKNNPointwiseBoundFixedKThmEq}) hold simultaneously with probability at least $1-\delta$. Plugging inequalities (\ref{varBoundInProofOfKNNPointwiseBoundFixedKThmEq}) and (\ref{biasBoundInProofOfKNNPointwiseBoundFixedKThmEq}) back into (\ref{mainTriangleInequalityInProofOfKNNPointwiseBoundFixedKThmEq}) completes the proof.
\end{proof}


\subsection{Maximum estimation with kNN}\label{KNNMaxEstBoundedSubSec}
In Section \ref{plugInCorruptedRegressionFunctionSec} we discussed how the noise probabilities $\probFlipZeroToOne$ and $\probFlipOneToZero$ are determined by the extrema of the corrupted regression function $\noisy[\regressionFunction]$. This motivates the question of determining the maximum of a function $f$, which is the focus of this section, although we believe the results of this section may also be of independent interest. As in Section \ref{KNNFuncEstSec}, we shall assume we have access to a sample $\sample_f = \left\lbrace \left(X_i,Z_i\right)\right\rbrace_{i \in [n]}$ with $(X_i,Z_i) \sim \Prob$ generated i.i.d. where $\Prob$ is an unknown distribution on $\X \times [0,1]$ with $f(x)= \E\left[Z|X=x\right]$. Our aim is to estimate $M(f):=\sup_{x \in \suppMarginalDistribution}\left\lbrace f(x) \right\rbrace$ based on $\sample_f$. We give a bound for a simple estimator under the assumption of measure-smoothness of the regression function.

Before proceeding we should point out that mode-estimation via kNN was previously proposed by \citep{DasKpo,jiang2017modal,Jiang}, but both solve a related but different problem to ours. The former papers deal with the unsupervised problem of finding the point where the density is highest. The latter work deals with finding the point where a function is maximal. The key difference is that performance is judged in terms of distance in the input space, whereas we care about distance in function output. As a consequence, previous works require strong curvature assumptions: That the Hessian exists and is negative definite for all modal points. By contrast, we are able to work on metric spaces where the notion of a Hessian does not even make sense, and we only require the measure-smoothness condition which holds, for instance, whenever the regression function is H\"older and its density is bounded from below. We also do not require a bounded input domain. 

Take the following estimator for $M(f)$, defined as the empirical maximum of the values of the regression estimator:
\begin{align*}
\empiricalMaximum[f] = \max_{i \in [n]}\left\lbrace \KNNOptimalEstBase{f}{n}{k}\left(X_i\right)\right\rbrace.
\end{align*}

\begin{theorem}[Maximum estimation bound with measure-smoothness]\label{NaiveMaxBoundThm} Suppose that $f$ is measure-smooth with exponent $\measureSmoothnessExponent>0$ and constant $\measureSmoothnessConstant>0$. 
 Take $n\in \N$, $\delta \in \left(0,1\right)$, $k \in \N\cap [4 \log(2/\delta)+1, n/2]$.
Suppose further that $\sample_f$ is generated i.i.d. from $\Prob$. 
Then for any $\delta \in \left(0,1\right)$ the following holds with probability at least $1-\delta$ over $\sample_f$,
\begin{align*}
\left| \empiricalMaximum[f]-M(f)\right| \leq   \sqrt{\frac{\log(6n/\delta)}{2k}}+2\omega \cdot \left(\frac{2k}{n}\right)^{\measureSmoothnessExponent}.
\end{align*}
\end{theorem}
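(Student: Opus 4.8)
The plan is to prove the two-sided estimate by bounding $\empiricalMaximum[f] - M(f)$ and $M(f) - \empiricalMaximum[f]$ separately; since the error term for the lower direction will carry the extra factor of $2$ on $\omega \cdot (2k/n)^{\lambda}$, taking the larger of the two gives the stated bound on $|\empiricalMaximum[f] - M(f)|$. The common engine for both directions is the pointwise guarantee of Theorem \ref{KNNPointwiseBoundFixedKThm}, applied not at one point but simultaneously at every data point. Since that theorem is valid whenever $\XFocusPoint = X_j$ for a fixed index $j$, I would invoke it with confidence parameter $\delta/(2n)$ for each $j \in [n]$ and take a union bound: this is exactly what turns its $\log(3/\delta')$ into $\log(6n/\delta)$ here, and produces an event $E_1$, of probability at least $1-\delta/2$, on which $|\KNNest[f](X_i) - f(X_i)| \le \sqrt{\log(6n/\delta)/(2k)} + \omega \cdot (2k/n)^{\lambda}$ holds for all $i \in [n]$ at once.

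The upper bound is then immediate. Let $i^\star$ be the (random) index with $\empiricalMaximum[f] = \KNNest[f](X_{i^\star})$. On $E_1$ we get $\empiricalMaximum[f] \le f(X_{i^\star}) + \sqrt{\log(6n/\delta)/(2k)} + \omega \cdot (2k/n)^{\lambda}$, and because every sampled point lies in $\suppMarginalDistribution$ almost surely we have $f(X_{i^\star}) \le M(f)$, which closes this direction. The only subtlety is that $i^\star$ is data-dependent, which is precisely why the simultaneity furnished by $E_1$ is needed.

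The lower bound is the crux, and I expect it to be the main obstacle: $\empiricalMaximum[f]$ is a maximum over the finite sample while $M(f)$ is a supremum over all of $\suppMarginalDistribution$, so I must certify that some sampled point already carries a near-maximal value of $f$. Fix $\epsilon > 0$ and pick a deterministic $x^\star \in \suppMarginalDistribution$ with $f(x^\star) > M(f) - \epsilon$, which exists by definition of the supremum. Let $X_{\tau_{n,1}(x^\star)}$ be its nearest neighbour in the sample. Applying Lemma \ref{LipLemma} at the fixed centre $x^\star$ with $q = 1$ and confidence $\delta/2$ — the step that forces the hypothesis $k \ge 4\log(2/\delta)+1$ — gives an event $E_2$ of probability at least $1-\delta/2$ on which $|f(X_{\tau_{n,1}(x^\star)}) - f(x^\star)| \le \omega \cdot (2k/n)^{\lambda}$. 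On $E_1 \cap E_2$ I would chain three steps: $\empiricalMaximum[f] \ge \KNNest[f](X_{\tau_{n,1}(x^\star)})$, since the latter is one of the maximised values; then $\KNNest[f](X_{\tau_{n,1}(x^\star)}) \ge f(X_{\tau_{n,1}(x^\star)}) - \sqrt{\log(6n/\delta)/(2k)} - \omega \cdot (2k/n)^{\lambda}$ by $E_1$; and finally $f(X_{\tau_{n,1}(x^\star)}) \ge f(x^\star) - \omega \cdot (2k/n)^{\lambda} \ge M(f) - \epsilon - \omega \cdot (2k/n)^{\lambda}$ by $E_2$ and the choice of $x^\star$. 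The two independent appeals to measure-smoothness — one buried inside the pointwise bound and one used to move from $x^\star$ to its nearest neighbour — are exactly what generate the factor $2$ on the $\omega \cdot (2k/n)^{\lambda}$ term.

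Putting the chain together, on $E_1 \cap E_2$ (which has probability at least $1-\delta$ by a union bound) we obtain $\empiricalMaximum[f] \ge M(f) - \epsilon - \sqrt{\log(6n/\delta)/(2k)} - 2\omega \cdot (2k/n)^{\lambda}$ for each fixed $\epsilon > 0$. To eliminate $\epsilon$ I would let it tend to $0$: writing $C$ for the $\epsilon$-free error, the events $\{\empiricalMaximum[f] < M(f) - C - \epsilon\}$ increase to $\{\empiricalMaximum[f] < M(f) - C\}$ as $\epsilon \downarrow 0$, so continuity of probability from below gives $\Prob[\empiricalMaximum[f] < M(f) - C] \le \delta$, removing $\epsilon$ from the statement. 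Beyond this limiting step, the parts requiring the most care are the probability bookkeeping that produces $\log(6n/\delta)$ from the per-point confidence $\delta/(2n)$ and the verification that each sampled point lies in $\suppMarginalDistribution$; the upper direction, by contrast, is essentially free once $E_1$ is established.
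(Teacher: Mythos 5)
Your proposal is correct and follows essentially the same route as the paper's own proof: a union bound of Theorem \ref{KNNPointwiseBoundFixedKThm} over the $n$ data points at per-point confidence $\delta/(2n)$ (which is exactly where the $\log(6n/\delta)$ comes from), the easy upper direction via $f(X_{i^\star}) \leq M(f)$, the lower direction via an $\epsilon$-near-maximiser $x^\star$ together with Lemma \ref{LipLemma} applied to its nearest neighbour (accounting for the factor $2$ on $\omega\cdot(2k/n)^{\lambda}$), and finally $\epsilon \downarrow 0$ by continuity of measure. The only cosmetic difference is that the paper loosely writes $2\omega$ in the upper direction as well, which is immaterial since the two-sided bound takes the larger term anyway.
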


\begin{proof} By
Theorem \ref{KNNPointwiseBoundFixedKThm} combined with the union bound, the following holds simultaneously for all $i \in [n]$, with probability at least $1-\delta/2$ over $\sample_f$,
{\small  
\begin{align}
\left| \KNNOptimalEstBase{f}{n}{k}(X_i)-f(X_i) \right| &\leq 
\sqrt{\frac{\log(6n/\delta)}{2k}}+\omega \cdot \left(\frac{2k}{n}\right)^{\measureSmoothnessExponent}. \label{knnBoundOverData}
\end{align}}
Given (\ref{knnBoundOverData}) we can upper bound $\empiricalMaximum[f]$ by
\begin{align*}
\empiricalMaximum[f] &\leq \max_{i \in [n]}\left\lbrace f(X_i) \right\rbrace + \sqrt{\frac{\log(6n/\delta)}{2k}}+\omega \cdot \left(\frac{2k}{n}\right)^{\measureSmoothnessExponent}\\
& \leq M(f)+  \sqrt{\frac{\log(6n/\delta)}{2k}}+2\omega \cdot \left(\frac{2k}{n}\right)^{\measureSmoothnessExponent}.
\end{align*}
We now lower bound $\empiricalMaximum[f]$ as follows. Take $\epsilon>0$ and choose $x_0 \in \suppMarginalDistribution$ with $f(x_0)\geq M(f) -\epsilon$ (a point that nearly achieves the supremum of $f$). By Lemma \ref{LipLemma} with  probability at least $1-\delta/2$ over $\sample_f$ we have  \begin{align}
|f\left(X_{\tau_{n,1}(x_0)}\right)- f(x_0)| 
\leq \measureSmoothnessConstant \cdot \left(\frac{2k}{n}\right)^{\measureSmoothnessExponent}. \label{lipsTermInPfOfMaxBoundEq}
\end{align} In conjunction, the bounds (\ref{knnBoundOverData}) and (\ref{lipsTermInPfOfMaxBoundEq}) imply
\begin{align*}
\empiricalMaximum[f] & \geq \hat{f}_{n,k}\left(X_{\tau_{n,1}(x_0)}\right)\\
& \geq f\left(X_{\tau_{n,1}(x_0}\right)-  \sqrt{\frac{\log(6n/\delta)}{2k}}-\omega \cdot \left(\frac{2k}{n}\right)^{\measureSmoothnessExponent}\\
& \geq f(x_0)-  \sqrt{\frac{\log(6n/\delta)}{2k}}-2\omega \cdot \left(\frac{2k}{n}\right)^{\measureSmoothnessExponent}\\
& \geq M(f)-  \sqrt{\frac{\log(6n/\delta)}{2k}}-2\omega \cdot \left(\frac{2k}{n}\right)^{\measureSmoothnessExponent}-\epsilon.
\end{align*}
Combining this with the upper bound we see that given  (\ref{knnBoundOverData}) and (\ref{lipsTermInPfOfMaxBoundEq}) we have
\begin{align}
\left| \empiricalMaximum[f]-M(f)\right|  &\leq   \sqrt{\frac{\log(6n/\delta)}{2k}}+2\omega \cdot \left(\frac{2k}{n}\right)^{\measureSmoothnessExponent} +\epsilon.\label{almostMaxThmEpsBound}
\end{align}
By the union bound applied to (\ref{knnBoundOverData}) and (\ref{lipsTermInPfOfMaxBoundEq}),  (\ref{almostMaxThmEpsBound}) holds with probability at least $1-\delta$. Letting $\epsilon\rightarrow 0$ and applying continuity of measure completes the proof of the theorem.
\end{proof}

\subsection{Main result: Fast rates in the presence of unknown asymmetric label noise}\label{mainBoundsSec}
We put everything together in this section, and complete the analysis of the label noise robust kNN classifier given in Algorithm \ref{KNNForLabelNoiseUniformCaseAlgo}. This classifier is simply the kNN instantiation of Algorithm \ref{classificationBasedOnCorruptedRegressionFunctionAlgo}. Algorithm \ref{KNNForLabelNoiseUniformCaseAlgo} was previously proposed by \citep{Gao} with an empirical demonstration of its success in practice, but without an analysis of its finite sample behaviour when the noise probabilities are unknown. We shall now prove that it attains the known minimax optimal rates \emph{of the noiseless setting}, up to logarithmic factors, despite the presence of unknown asymmetric label noise, provided the assumptions discussed in Section \ref{assumptions}.
\begin{algorithm}[htbp]
 {\caption{A k nearest neighbour method for label noise\label{KNNForLabelNoiseUniformCaseAlgo}}}
{ 
 \begin{enumerate}
     \item Define $\noisy[{\est[\regressionFunction]}]$ by $\noisy[{\est[\regressionFunction]}](x) :=\frac{1}{k} \cdot \sum_{q \in [k]}\noisy[Y]_{\tau_{n,q}(x)}$;
     
     \item Compute $\estProbFlipZeroToOne := \min_{i \in [n]}\left\lbrace \noisy[{\est[\regressionFunction]}](X_i)\right\rbrace$ and\\ \phantom{Compute} $\estProbFlipOneToZero: = 1-\max_{i \in [n]}\left\lbrace \noisy[{\est[\regressionFunction]}](X_i)\right\rbrace$;
     \item Let $\est[\classifier]_{n,k}(x):= \one\left\lbrace \noisy[{\est[\regressionFunction]}](x) \geq 1/2 \cdot \left(1+\estProbFlipZeroToOne-\estProbFlipOneToZero\right) \right\rbrace$.
     \end{enumerate}
 }
\end{algorithm}

\begin{theorem}\label{riskBoundForUniformSmoothnessThm} Suppose that Assumptions \ref{majorityAssumption} and \ref{rangeAssumption} hold, Assumption \ref{regressionFunctionIsUniformlySmoothAssumption} holds with exponent $\measureSmoothnessExponent>0$ and constant $\measureSmoothnessConstant>0$, and Assumption \ref{marginAssumption} holds with constant $\marginExponent\geq 0$ and $\marginConstant\geq 1$. Take any $n \in \N$, $\delta \in \left(0,1\right)$,
and suppose we have  $\noisySample = \{ (X_i,\noisy[Y]_i)\}_{i \in [n]}$ with $(X_i,\noisy[Y]_i) \sim \noisy[\Prob]$. Let $\est[\classifier]_{n,k}$ be the label-noise robust $k$NN classifier with an arbitrary choice of $k \in \N\cap [4 \log(3/\delta)+1, n/2]$ (Algorithm \ref{KNNForLabelNoiseUniformCaseAlgo}). With probability at least $1-\delta$ over $\noisy[\sample]$, we have
\begin{align}\label{riskBoundForUniformSmoothnessThmEq}
\risk\left(\est[\classifier]_{n,k}\right) &\leq \risk\left(\oracle\right)+
\marginConstant \cdot \left(\frac{8}{ 1-\probFlipZeroToOne-\probFlipOneToZero} \right)^{\marginExponent+1}
\nonumber\\
&\cdot \left[\sqrt{\frac{\log(18n/\delta)}{k}}+2\omega \cdot \left(\frac{2k}{n}\right)^{\measureSmoothnessExponent}\right]^{\marginExponent+1}
+\delta,
\end{align}
where $\oracle(x)\equiv\one\left\lbrace \regressionFunction(x)\geq 1/2\right\rbrace$ is the Bayes classifier. 

In particular, if we take 
$k_n=  
\left\lceil \left(\frac{\log(18n/\delta)}{
2\measureSmoothnessConstant^2} \right)^{\frac{1}{2\measureSmoothnessExponent+1}} \cdot n^{\frac{2\measureSmoothnessExponent}{2\measureSmoothnessExponent+1}} \right\rceil
$ then for $n\geq 5 \cdot (10 \cdot \measureSmoothnessConstant^2)^{\frac{1}{2\measureSmoothnessExponent}} \cdot \log(18n/\delta)$, w.p. at least $1-\delta$,
\begin{align*}
    \risk\left(\est[\classifier]_{n,k^*_n}\right) \leq \risk\left(\oracle\right) +\marginConstant &\cdot \left(\frac{2^{2\measureSmoothnessExponent+5}\cdot \measureSmoothnessConstant^{\frac{1}{2\measureSmoothnessExponent+1}} }{ 1-\probFlipZeroToOne-\probFlipOneToZero} \right)^{\marginExponent+1}\\&\cdot \left( \frac{\log(18n/\delta)}{n}\right)^{\frac{\measureSmoothnessExponent(\marginExponent+1)}{2\measureSmoothnessExponent+1}} +\delta.    
\end{align*}
\end{theorem}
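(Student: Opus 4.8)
The plan is to treat $\est[\classifier]_{n,k}$ as a plug-in classifier and reduce the excess risk to the estimation error of the reconstructed regression function. First I would observe that, writing $\est[\regressionFunction](x) = (\noisy[{\est[\regressionFunction]}](x)-\estProbFlipZeroToOne)/(1-\estProbFlipZeroToOne-\estProbFlipOneToZero)$ as in Lemma \ref{elementaryRatioLemmaForCorruptedRegressionFunctionEstimation}, step 3 of Algorithm \ref{KNNForLabelNoiseUniformCaseAlgo} is exactly $\est[\classifier]_{n,k}(x)=\one\{\est[\regressionFunction](x)\ge 1/2\}$, while $\oracle(x)=\one\{\regressionFunction(x)\ge 1/2\}$. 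I would then invoke the standard plug-in identity
\begin{align*}
\risk(\est[\classifier]_{n,k})-\risk(\oracle)=\E_X\!\left[\,|2\regressionFunction(X)-1|\,\one\{\est[\classifier]_{n,k}(X)\neq\oracle(X)\}\right],
\end{align*}
together with the elementary fact that a disagreement at $x$ forces $\regressionFunction(x)$ and $\est[\regressionFunction](x)$ onto opposite sides of $1/2$, hence $|\regressionFunction(x)-1/2|\le|\est[\regressionFunction](x)-\regressionFunction(x)|$. This reduces everything to a pointwise bound on $|\est[\regressionFunction](x)-\regressionFunction(x)|$ combined with Tsybakov's margin condition.

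Next I would assemble the pointwise error bound. By Lemma \ref{elementaryRatioLemmaForCorruptedRegressionFunctionEstimation}, $|\est[\regressionFunction](x)-\regressionFunction(x)|$ is at most $8/(1-\probFlipZeroToOne-\probFlipOneToZero)$ times the maximum of three errors: the pointwise kNN error $|\noisy[{\est[\regressionFunction]}](x)-\noisyRegressionFunction(x)|$ and the two noise-rate errors $|\estProbFlipZeroToOne-\probFlipZeroToOne|$, $|\estProbFlipOneToZero-\probFlipOneToZero|$. The first is controlled by Theorem \ref{KNNPointwiseBoundFixedKThm} applied to $f=\noisyRegressionFunction$, which by eq. (\ref{noisyRegInTermsOfTrueRegEq}) is measure-smooth with the same exponent $\measureSmoothnessExponent$ and constant $(1-\probFlipZeroToOne-\probFlipOneToZero)\measureSmoothnessConstant\le\measureSmoothnessConstant$. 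For the other two, Assumption \ref{rangeAssumption} gives $\inf_{\suppMarginalDistribution}\noisyRegressionFunction=\probFlipZeroToOne$ and $\sup_{\suppMarginalDistribution}\noisyRegressionFunction=1-\probFlipOneToZero$, so that $\estProbFlipZeroToOne=\min_i\noisy[{\est[\regressionFunction]}](X_i)$ and $\estProbFlipOneToZero=1-\max_i\noisy[{\est[\regressionFunction]}](X_i)$ are precisely the extremum estimators of Theorem \ref{NaiveMaxBoundThm}, applied to $\noisyRegressionFunction$ and to $1-\noisyRegressionFunction$. Combining these high-probability events at level $\delta/3$ each produces the bracketed quantity $\sqrt{\log(18n/\delta)/k}+2\measureSmoothnessConstant(2k/n)^{\measureSmoothnessExponent}$ and the prefactor $8/(1-\probFlipZeroToOne-\probFlipOneToZero)$ of (\ref{riskBoundForUniformSmoothnessThmEq}); the precondition of Lemma \ref{elementaryRatioLemmaForCorruptedRegressionFunctionEstimation} on the size of these errors is met once $n$ is large, which is where the sample-size hypothesis of the corollary enters.

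It then remains to integrate the pointwise bound against the margin. Writing $\gamma_n$ for the right-hand side of the pointwise bound (so $|\est[\regressionFunction](X)-\regressionFunction(X)|\le\gamma_n$ with high probability at each fixed $X$), I would split the fresh test point into a near region $\{0<|\regressionFunction(X)-1/2|\le\gamma_n\}$ and a far region $\{|\regressionFunction(X)-1/2|>\gamma_n\}$. On the near region Assumption \ref{marginAssumption} bounds the $\marginalDistribution$-measure by $\marginConstant\gamma_n^{\marginExponent}$ while $|2\regressionFunction-1|\le 2\gamma_n$, giving the term $\marginConstant(8/(1-\probFlipZeroToOne-\probFlipOneToZero))^{\marginExponent+1}[\cdots]^{\marginExponent+1}$. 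On the far region a disagreement can only occur if the pointwise estimate fails, an event of probability at most $\delta$ at each fixed test point; since $|2\regressionFunction-1|\le 1$, a Fubini argument bounds the expected far-region contribution by $\delta$, which is the source of the additive $+\delta$. I expect this near/far integration to be the main obstacle: because the kNN guarantee is only pointwise over the random sample and cannot be union-bounded over all of $\X$, one must carefully interchange the sample randomness with the expectation over the fresh test point, keeping the sample-level extremum estimates on a single good event, rather than hoping for a uniform-in-$x$ estimate.

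Finally, for the stated corollary I would substitute $k=k_n$, the choice that balances the variance term $\sqrt{\log(18n/\delta)/k}$ against the bias term $\measureSmoothnessConstant(2k/n)^{\measureSmoothnessExponent}$; both then become of order $\measureSmoothnessConstant^{1/(2\measureSmoothnessExponent+1)}(\log(18n/\delta)/n)^{\measureSmoothnessExponent/(2\measureSmoothnessExponent+1)}$, and raising to the power $\marginExponent+1$ and collecting constants yields the displayed rate. The hypothesis $n\ge 5\cdot(10\measureSmoothnessConstant^2)^{1/(2\measureSmoothnessExponent)}\log(18n/\delta)$ is exactly what guarantees $k_n\in[4\log(3/\delta)+1,\,n/2]$, so that Theorems \ref{KNNPointwiseBoundFixedKThm} and \ref{NaiveMaxBoundThm} apply and the errors are small enough for the precondition of Lemma \ref{elementaryRatioLemmaForCorruptedRegressionFunctionEstimation}.
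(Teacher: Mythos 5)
Your proposal follows essentially the same route as the paper's proof: rewrite step 3 of Algorithm \ref{KNNForLabelNoiseUniformCaseAlgo} as a plug-in rule and invoke Lemma \ref{elementaryRatioLemmaForCorruptedRegressionFunctionEstimation}; control the corrupted kNN estimate via Theorem \ref{KNNPointwiseBoundFixedKThm}, using that $\noisy[\regressionFunction]$ inherits measure-smoothness with constant $(1-\probFlipZeroToOne-\probFlipOneToZero)\cdot\measureSmoothnessConstant\leq\measureSmoothnessConstant$; control $\estProbFlipZeroToOne$ and $\estProbFlipOneToZero$ via Theorem \ref{NaiveMaxBoundThm} applied to $\noisyRegressionFunction$ and $1-\noisyRegressionFunction$; and integrate the resulting pointwise guarantee against Assumption \ref{marginAssumption}, balancing bias and variance for the choice $k_n$. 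All of this matches the paper.

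The one step where your argument, as written, does not deliver the stated conclusion is the far region. You bound the \emph{expected} far-region contribution by $\delta$ via Fubini; that proves a bound on $\E_{\noisy[\sample]}\left[\risk(\est[\classifier]_{n,k})\right]$, not the claimed statement holding \emph{with probability at least $1-\delta$ over} $\noisy[\sample]$. You correctly flag this interchange as the main obstacle, but the concrete mechanism is missing, and a per-point failure probability of order $\delta$ cannot by itself be upgraded to a high-probability statement. The paper's device is to run Theorem \ref{KNNPointwiseBoundFixedKThm} at each fixed $x$ at the \emph{squared} confidence level $\delta^2/3$, define the good set $\mathcal{G}_{\delta}=\left\lbrace x\in\suppMarginalDistribution:\left|\noisy[{\est[\regressionFunction]}](x)-\noisyRegressionFunction(x)\right|\leq\xi(n,k,\delta)\right\rbrace$, deduce $\E_{\noisy[\sample]}\left[\marginalDistribution\left(\suppMarginalDistribution\backslash\mathcal{G}_{\delta}\right)\right]\leq\delta^2/3$ by Fubini, and then apply Markov's inequality to conclude $\marginalDistribution\left(\suppMarginalDistribution\backslash\mathcal{G}_{\delta}\right)\leq\delta$ with probability at least $1-\delta/3$ over the sample; on the single event where this and the two extremum bounds hold, the complement of $\mathcal{G}_{\delta}$ contributes at most $\delta$ to the risk integral \emph{deterministically}, which is the true source of the additive $\delta$ in (\ref{riskBoundForUniformSmoothnessThmEq}). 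Two smaller remarks. First, the precondition of Lemma \ref{elementaryRatioLemmaForCorruptedRegressionFunctionEstimation} does not require the sample-size hypothesis: whenever $\xi(n,k,\delta)>(1-\probFlipZeroToOne-\probFlipOneToZero)/4$ the bound (\ref{riskBoundForUniformSmoothnessThmEq}) is trivially true because $\marginConstant\geq 1$ makes its right-hand side exceed $1$; so the first display holds for every admissible $k$, and the condition on $n$ in the second part serves only to ensure $k_n\in[4\log(3/\delta)+1,n/2]$ and to evaluate $\xi(n,k_n,\delta)$. Second, your plug-in identity carries $|2\regressionFunction-1|$ where the paper integrates $|\regressionFunction-1/2|$, so your route picks up an extra factor of $2$ in the constant, harmless for the rate.
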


\newcommand{\goodDeltaSet}{\mathcal{G}_{\delta}}
\newcommand{\smallErrors}{\xi(n,k,\delta)}
\begin{proof} 
Observe first that the measure-smoothness property of $\regressionFunction$ implies that $\noisy[\regressionFunction]$ is measure-smooth with the same exponent, and constant 
$(1-\probFlipZeroToOne-\probFlipOneToZero)\cdot \measureSmoothnessConstant \leq \measureSmoothnessConstant$, so we can work with the latter to avoid clutter.

We define the subset of the input domain where the corrupted regression function has low estimation error:
\[\goodDeltaSet := \left\lbrace x \in \suppMarginalDistribution: \left|\noisy[{\est[\regressionFunction]}](x)-\noisy[\regressionFunction](x)\right|\leq  \smallErrors \right\rbrace.\]
where $\smallErrors$ is a small error that will be made precise shortly.
We want to ensure that a randomly drawn test point is in this set with probability at least $1-\delta/3$. By Theorem \ref{KNNPointwiseBoundFixedKThm}, for each $x \in \suppMarginalDistribution$, 
the following holds with probability at least $1-\delta^2/3$,
{
\begin{align*}
\left|\noisy[{\est[\regressionFunction]}](x)-\noisy[\regressionFunction](x)\right|&\leq 
\sqrt{\frac{\log(3/(\delta^2/3))}{2k}}+\omega \cdot \left(\frac{2k}{n}\right)^{\measureSmoothnessExponent}\\
&= \sqrt{\frac{2\log(3/\delta)}{2k}}+\omega \cdot \left(\frac{2k}{n}\right)^{\measureSmoothnessExponent}\\
%
%
&=:\xi_1(n,k,\delta). 
%
\end{align*}}
That is, for each fixed $x \in \suppMarginalDistribution$, we have $x \in \goodDeltaSet$ with probability at least $1-\delta^2/3$ i.e. $\E_{\sample_f}\left[\one\left\lbrace x \notin \goodDeltaSet\right\rbrace \right] \leq \delta^2/3$. We now integrate over $\marginalDistribution$ and use Fubini's theorem as follows:
\begin{align*}
\E_{\sample_f}\left[\marginalDistribution\left(\suppMarginalDistribution\backslash \goodDeltaSet\right)\right] &= \E_{\sample_f}\left[\int \one\left\lbrace x \notin \goodDeltaSet\right\rbrace d\marginalDistribution(x) \right]  \nonumber\\&=\int \E_{\sample_f}\left[\one\left\lbrace x \notin \goodDeltaSet\right\rbrace \right]d\marginalDistribution(x)\leq \delta^2/3.      
\end{align*}
Thus, by Markov's inequality we have $\marginalDistribution\left(\suppMarginalDistribution\backslash \goodDeltaSet\right)\leq \delta$ with probability at least $1-\delta/3$. Furthermore, by Theorem \ref{NaiveMaxBoundThm} with probability at least $1-\delta/3$,
{\small
\begin{align*}
\left|\estProbFlipZeroToOne -\probFlipZeroToOne\right| \leq  
\sqrt{\frac{\log(6n/(\delta/3))}{2k}}+2\omega \cdot \left(\frac{2k}{n}\right)^{\measureSmoothnessExponent}
%
%
&=: \xi_2(n,k,\delta). 
\end{align*}}
Similarly, with  probability at least $1-\delta/3$ we have $\left|\estProbFlipOneToZero -\probFlipOneToZero\right| \leq \xi_2(n,k,\delta)$, and we let 
\begin{align}
\smallErrors &:=\max\{\xi_1(n,k,\delta),\xi_2(n,k,\delta)\} \nonumber\\ 
&\le  \sqrt{\frac{\log(18n/\delta)}{k}}+2\omega \cdot \left(\frac{2k}{n}\right)^{\measureSmoothnessExponent}. \label{smallErrors}
\end{align}

By the union bound, with probability at least $1-\delta$, we have $\marginalDistribution\left(\suppMarginalDistribution\backslash\goodDeltaSet\right)\leq \delta$ and $\max\left\lbrace \left|\estProbFlipZeroToOne -\probFlipZeroToOne\right| ,\left|\estProbFlipOneToZero -\probFlipOneToZero\right|\right\rbrace \leq \smallErrors$. 
Hence, it suffices to assume that $\marginalDistribution\left(\suppMarginalDistribution\backslash\goodDeltaSet\right)\leq \delta$ and $\max\left\lbrace \left|\estProbFlipZeroToOne -\probFlipZeroToOne\right| ,\left|\estProbFlipOneToZero -\probFlipOneToZero\right|\right\rbrace \leq \smallErrors$ holds and show that eq (\ref{riskBoundForUniformSmoothnessThmEq}) holds. 

Observe that we can rewrite $\est[\classifier]_{n}:\X\rightarrow \Y$ as $\est[\classifier]_{n}(x)=\one\left\lbrace \est[\regressionFunction](x)\geq 1/2\right\rbrace$, where $\est[\regressionFunction](x):= \left(\noisy[{\est[\regressionFunction]}](x)-\estProbFlipZeroToOne\right)/\left(1-\estProbFlipZeroToOne-\estProbFlipOneToZero\right)$. By Lemma \ref{elementaryRatioLemmaForCorruptedRegressionFunctionEstimation} for all $x \in \goodDeltaSet$ we have deterministically that: 
\begin{align*}
&\left| \est[\regressionFunction](x)-\regressionFunction(x)\right| \\&\leq 8 \cdot 
\frac{
\max\left\lbrace \left| \estNoisyRegressionFunction(x) - \noisyRegressionFunction(x)\right|, \left| \estProbFlipZeroToOne-\probFlipZeroToOne \right|, \left| \estProbFlipOneToZero-\probFlipOneToZero \right| \right\rbrace}{\left(1-\probFlipZeroToOne-\probFlipOneToZero\right)}\\ 
&\leq  8 \cdot \left(1-\probFlipZeroToOne-\probFlipOneToZero\right)^{-1} \cdot \smallErrors.
\end{align*}
Hence, observe that, given any $x \in \X$ with $\est[\classifier]_{n}(x)\neq \oracle(x)\equiv\one\left\lbrace \regressionFunction(x)\geq 1/2\right\rbrace$ we must have $\left|\regressionFunction(x)-1/2\right|\leq 8 \cdot \left(1-\probFlipZeroToOne-\probFlipOneToZero\right)^{-1} \cdot \smallErrors$. Hence, by Assumption \ref{marginAssumption}, with probability at least $1-\delta$, we have
{\small
\begin{align*}
&\risk\left(\est[\classifier]_{n,k}\right) -\risk\left(\oracle\right)\\ & = \int_{\X} \left|\regressionFunction(x)-\frac{1}{2}\right|\cdot \one\left\lbrace \est[\classifier]_{n,\delta}(x)\neq \oracle(x)\right\rbrace d\marginalDistribution(x)\\
&\leq \int_{\goodDeltaSet} \left|\regressionFunction(x)-\frac{1}{2}\right|\cdot \one\left\lbrace \est[\classifier]_{n,\delta}(x)\neq \oracle(x)\right\rbrace d\marginalDistribution(x)+\marginalDistribution\left(\X\backslash \goodDeltaSet\right)\\
&\leq  \int_{\X} \left|\regressionFunction(x)-\frac{1}{2}\right|\cdot \one\left\lbrace  \left|\regressionFunction(x)-\frac{1}{2}\right| \leq \frac{ 8 \cdot \smallErrors}{ 1-\probFlipZeroToOne-\probFlipOneToZero} \right\rbrace d\marginalDistribution(x)+\delta\\
&\leq \marginConstant \cdot \left(\frac{8 \cdot \smallErrors}{ 1-\probFlipZeroToOne-\probFlipOneToZero} \right)^{\marginExponent+1} +\delta.
\end{align*}}
Plugging in eq. (\ref{smallErrors}) completes the proof of the first part.

The second part follows by choosing $k$ that approximately equates the two terms on the right hand side of eq. (\ref{smallErrors}).
That is, with the choice
\begin{align*}
k_n=  
\left\lceil \left(\frac{\log(18n/\delta)}{
2\measureSmoothnessConstant^2} \right)^{\frac{1}{2\measureSmoothnessExponent+1}} \cdot n^{\frac{2\measureSmoothnessExponent}{2\measureSmoothnessExponent+1}} \right\rceil,
\end{align*}
given $n\geq 5 \cdot (10 \cdot \measureSmoothnessConstant^2)^{\frac{1}{2\measureSmoothnessExponent}} \cdot \log(18n/\delta)$ we have $k_n \geq 4\log(3/\delta)+1$ and $\xi(n,k_n,\delta)$ takes the form
\begin{align*}
    \xi(n,k_n,\delta)= 4^{\measureSmoothnessExponent+1}\cdot \measureSmoothnessConstant^{\frac{1}{2\measureSmoothnessExponent+1}} \cdot \left( \frac{\log(18n/\delta)}{n}\right)^{\frac{\measureSmoothnessExponent}{2\measureSmoothnessExponent+1}}.
\end{align*}
Plugging this into the excess risk completes the proof of the second part of the theorem.
\end{proof}

\subsubsection{On setting the value of $k$}
We used the theoretically optimal value of $k$ in our analysis, which is not available in practice. Methods exist to set $k$ in a data-driven manner. Cross-validation is amongst the most popular practical approaches \cite{Inouye,Gao}, and there is also an ample literature on adaptive methods (e.g. \citet[Chapter 8]{gine_nickl_2015}) that allow us to retain nearly optimal rates without access to the unknown parameters of the analysis. 

\section{Discussion: Inconsistency of kNN in the presence of asymmetric label noise}\label{inconsistencySec}
Our main result implies that under the measure-smoothness condition, and provided the label-noise identifiability conditions hold, the statistical difficulty of classification with or without asymmetric label noise is the same in the minimax sense, up to constants and log factors. However, the algorithm that we used to achieve this rate was not the classical kNN. This invites the question as to whether this must be so (or is it an artefact of the proof)? We find this question interesting in the light of observations and claims in the literature about the label-noise robustness of kNN and other high capacity models \cite{Tarlow,Gao} (see also the introductory section of \cite{MenonMLJ2018}).

To shed light on this, we show in this section that the classical kNN cannot achieve these rates, and even fails to be consistent in the presence of asymmetric label noise. In fact, in Theorem \ref{inconsistencyThm} below we shall see that \emph{any} algorithm that is Bayes-consistent in the classical sense may become inconsistent under this type of noise.
Indeed, on closer inspection, the robustness claims in the literature about kNN and other high capacity models \cite{Tarlow,Gao,MenonMLJ2018} -- explicitly or tacitly -- refer either to class-unconditional (i.e. symmetric) label noise, or assume that the regression function is bounded away from $1/2$. A recent result of \cite{cannings2018} even gives convergence rates for $k$-NN under instance-dependent label noise, but requires that the label noise probabilities become symmetric as the regression function approaches $1/2$.

In order to talk about consistency in the presence of label noise we need to make explicit the distinct roles of the train and test distributions in our notation. For any distribution $\Prob$ on $\X \times \{0,1\}$ and classifier $\classifier: \X \rightarrow \{0,1\}$ the risk is defined as $\risk\left(\classifier\right)=\risk\left(\classifier;\Prob\right):= \E\left[{\classifier}(X)\neq Y\right]$, where $\E$ denotes the expectation with respect to $\Prob$. In addition we let $\excessRisk\left(\classifier;\Prob\right)$ denote the excess risk defined by
$\excessRisk\left(\classifier,\Prob\right):=\risk(\classifier;\Prob) -\inf_{\tilde{\classifier}} \{ \risk(\tilde{\classifier};\Prob)\}$,
where the infimum is over all measurable functions $\tilde{\classifier}:\X \rightarrow \{0,1\}$. 
\begin{defn}[Consistency]\label{consistencyDefn} Let $\Prob_{\text{train}}$,$\Prob_{\text{test}}$ be probability distributions on $\X \times \{0,1\}$. Take $\tilde{\sample} = \{(X_i,\tilde{Y}_i)\}_{i\in \mathbb{N}}$ with $(X_i,\tilde{Y}_i)$ sampled independently from $\Prob_{\text{train}}$. For each $n \in \N$ we let $\hat{\classifier}_n$ denote the classifier obtained by applying the learning algorithm $\hat{\classifier}$ to the data set $\tilde{\sample}$. 
 We shall say that a classification algorithm $\hat{\classifier}$ is consistent with training distribution $\Prob_{\text{train}}$ and test distribution $\Prob_{\text{test}}$ if we have $\lim_{n \rightarrow \infty} \excessRisk(\hat{\classifier}_n;\Prob_{\text{test}}) = 0$
 almost surely
 over the data $\tilde{\sample}$.
\end{defn}
Let  $\Prob_{\left(\marginalDistribution,\regressionFunction\right)}$ denote the probability distribution on $\X \times \{0,1\}$ with marginal $\marginalDistribution$ and a regression function $\regressionFunction$. Learning with label noise means that the training distribution $\Prob_{\left(\marginalDistribution,\noisy[\regressionFunction]\right)}$ and test distribution $\Prob_{\left(\marginalDistribution,\regressionFunction\right)}$ are different.
We define the input set of disagreement:
{\small
\begin{align*}\label{defBayesConflictedSet}
\mathcal{A}_0(\regressionFunction,\noisy[\regressionFunction]):= \left\lbrace x \in \X: \left(\regressionFunction(x)-\frac{1}{2}\right)\left(\noisy[\regressionFunction](x)-\frac{1}{2}\right)<0\right\rbrace.
\end{align*}}
Note that $\mathcal{A}_0(\regressionFunction,\noisy[\regressionFunction])$ consists of points which are sufficiently close to the boundary with asymmetric label noise.

\begin{theorem}\label{inconsistencyThm} 
Suppose that a classification algorithm $\hat{\classifier}$ is consistent with 
$\Prob_{train}=\Prob_{test}=\Prob_{\left(\marginalDistribution,\noisy[\regressionFunction]\right)}$, and let $\regressionFunction\neq \noisy[\regressionFunction]$. 
If $\marginalDistribution\left(\mathcal{A}_0(\regressionFunction,\noisy[\regressionFunction]) \right)>0$ then $\hat{\classifier}$ is inconsistent with $\Prob_{train}=\Prob_{\left(\marginalDistribution,\noisy[\regressionFunction]\right)}$ and $\Prob_{test}=\Prob_{\left(\marginalDistribution,\regressionFunction\right)}$. 
\end{theorem}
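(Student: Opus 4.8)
The plan is to exploit that $\hat{\classifier}_n$ is a function of the training sample \emph{only}, and that in both scenarios of the statement the training data is drawn from $\Prob_{(\marginalDistribution,\noisyRegressionFunction)}$; hence the \emph{same} random sequence $(\hat{\classifier}_n)_n$ is being scored against two different test distributions, and I may work on the single probability space of the corrupted training sample. Write the Bayes classifiers of the two distributions as $\oracle(x)=\one\{\regressionFunction(x)\ge 1/2\}$ (test) and $\psi_*(x):=\one\{\noisyRegressionFunction(x)\ge 1/2\}$ (train). Using the standard excess-risk identity (the decomposition already used in the proof of Theorem~\ref{riskBoundForUniformSmoothnessThm}) I record, up to a universal factor that is immaterial here,
\begin{align*}
\excessRisk(\hat{\classifier}_n;\Prob_{(\marginalDistribution,\regressionFunction)})&=\int_{\X}\left|\regressionFunction(x)-1/2\right|\cdot\one\{\hat{\classifier}_n(x)\neq\oracle(x)\}\,d\marginalDistribution(x),\\
\excessRisk(\hat{\classifier}_n;\Prob_{(\marginalDistribution,\noisyRegressionFunction)})&=\int_{\X}\left|\noisyRegressionFunction(x)-1/2\right|\cdot\one\{\hat{\classifier}_n(x)\neq\psi_*(x)\}\,d\marginalDistribution(x).
\end{align*}
By the consistency hypothesis the second integral tends to $0$ almost surely over the training data.

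The structural heart of the argument is the behaviour on $\mathcal{A}_0(\regressionFunction,\noisyRegressionFunction)$: there the condition $(\regressionFunction-1/2)(\noisyRegressionFunction-1/2)<0$ forces $\oracle(x)\neq\psi_*(x)$ and also guarantees $|\regressionFunction(x)-1/2|>0$ and $|\noisyRegressionFunction(x)-1/2|>0$. Since $\hat{\classifier}_n(x)$ takes only one of two values, on $\mathcal{A}_0$ it must equal exactly one of $\oracle(x),\psi_*(x)$, giving the pointwise identity $\one\{\hat{\classifier}_n(x)\neq\oracle(x)\}=1-\one\{\hat{\classifier}_n(x)\neq\psi_*(x)\}$. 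Restricting the first integral to $\mathcal{A}_0$ and substituting this, I will lower bound the test excess risk by the constant $\int_{\mathcal{A}_0}|\regressionFunction-1/2|\,d\marginalDistribution$ (strictly positive, since $|\regressionFunction-1/2|>0$ on $\mathcal{A}_0$ and $\marginalDistribution(\mathcal{A}_0)>0$) minus a remainder $\int_{\mathcal{A}_0}|\regressionFunction-1/2|\cdot\one\{\hat{\classifier}_n\neq\psi_*\}\,d\marginalDistribution$.

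The main obstacle is that consistency only controls this remainder under the \emph{training} weight $|\noisyRegressionFunction-1/2|$, whereas the remainder carries the \emph{test} weight $|\regressionFunction-1/2|$, and the ratio of the two can be unbounded near the boundary, so the remainder cannot be bounded by the training excess risk directly. I will resolve this by truncation: for $\epsilon>0$ put $\mathcal{A}_0^\epsilon:=\{x\in\mathcal{A}_0:|\noisyRegressionFunction(x)-1/2|\ge\epsilon\}$. On $\mathcal{A}_0^\epsilon$ the training weight is bounded below, so $\int_{\mathcal{A}_0^\epsilon}\one\{\hat{\classifier}_n\neq\psi_*\}\,d\marginalDistribution\le\epsilon^{-1}\excessRisk(\hat{\classifier}_n;\Prob_{(\marginalDistribution,\noisyRegressionFunction)})\to 0$ a.s.; and since $|\regressionFunction-1/2|\le 1/2$, the remainder restricted to $\mathcal{A}_0^\epsilon$ also tends to $0$. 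Because $\mathcal{A}_0^\epsilon\uparrow\mathcal{A}_0$ as $\epsilon\downarrow 0$ (every point of $\mathcal{A}_0$ has $|\noisyRegressionFunction-1/2|>0$), continuity of measure lets me fix a small $\epsilon$ with $C_\epsilon:=\int_{\mathcal{A}_0^\epsilon}|\regressionFunction-1/2|\,d\marginalDistribution>0$. Combining the lower bound with the vanishing remainder, almost surely $\liminf_n\excessRisk(\hat{\classifier}_n;\Prob_{(\marginalDistribution,\regressionFunction)})\ge C_\epsilon>0$, so the test excess risk does not converge to $0$ and $\hat{\classifier}$ is inconsistent for the stated train/test pair, as claimed.
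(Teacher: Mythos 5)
Your proposal is correct and follows essentially the same route as the paper's proof: both rest on the pointwise observation that on $\mathcal{A}_0(\regressionFunction,\noisy[\regressionFunction])$ the two Bayes rules disagree, so $\one\{\hat{\classifier}_n(x)\neq\oracle(x)\}+\one\{\hat{\classifier}_n(x)\neq\psi_*(x)\}=1$ there, and both handle the vanishing weights near $1/2$ by truncating to a positive-measure subset via continuity of measure before letting the training excess risk tend to zero almost surely. The only difference is cosmetic: the paper truncates both margins symmetrically (the sets $\mathcal{A}_{\theta}$, packaged as the deterministic Lemma \ref{inconsistencyLemma} stating $\excessRisk\left(\classifier;\Prob_{\left(\marginalDistribution,\regressionFunction\right)}\right)+\excessRisk\left(\classifier;\Prob_{\left(\marginalDistribution,\noisy[\regressionFunction]\right)}\right)\geq\theta\cdot\marginalDistribution\left(\mathcal{A}_{\theta}(\regressionFunction,\noisy[\regressionFunction])\right)$ for every classifier), whereas you truncate only the training-side margin and trade the test excess risk off against $\epsilon^{-1}$ times the training excess risk, which yields the same conclusion.
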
 
The proof is given in Appendix. The essence of the argument is the simple observation that if the regression functions of the training and testing distributions disagree, then the trained classifier cannot agree with both.
Below we give a family of examples on $\X=\R$ with class-conditional label noise, where the standard $k_n$-NN classifier is inconsistent, yet the $k_n$-NN method for asymmetric label noise (Algorithm \ref{KNNForLabelNoiseUniformCaseAlgo}) is consistent. 

\textbf{Example:} Take any $\probFlipZeroToOne, \probFlipOneToZero \in \left(0,1/2\right)$ with $p_0\neq p_1$ and let $m:=\left({2-3\probFlipZeroToOne-\probFlipOneToZero}\right)/\left({4\left(1-\probFlipZeroToOne-\probFlipOneToZero\right)}\right)$. It follows that $m \in \left(0,1\right)$. Let $\X=[0,1]$ and let $\marginalDistribution$ be the Lebesgue measure on $\X$. Define $\regressionFunction:\X \rightarrow [0,1]$ by
\begin{align*}
\regressionFunction(x):= \begin{cases} \frac{3x}{2} &\text{ if }x \in \left[0,\frac{2m}{3}\right]\\
m &\text{ if }x \in \left[\frac{2m}{3},\frac{2m+1}{3}\right]\\
\frac{3x-1}{2} &\text{ if }x \in \left[\frac{2m+1}{3},1\right].
\end{cases}
\end{align*}
A special case of this example, with $p_0=0.1$ and $p_1=0.3$ is depicted in Figure \ref{fig:my_label}.
\begin{figure}
    \centering
    \includegraphics[height=5.5cm,width=7.3cm]{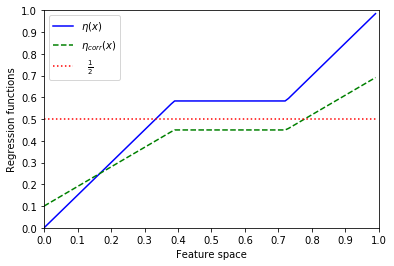}
    \caption{A pair of train and test regression functions ($\regressionFunction,\noisy[\regressionFunction]$) exemplifying a setting where classical kNN is inconsistent yet the kNN method for asymmetric label noise is consistent. }
    \label{fig:my_label}
\end{figure}

Indeed, for this family of examples, it follows from Theorem 1 in \cite{chaudhuri2014rates} that the standard $k_n$-NN classifier is strongly Bayes-consistent with $\Prob_{train}=\Prob_{test}=\Prob_{\left(\marginalDistribution,\noisy[\regressionFunction]\right)}$ whenever $k_n/n \rightarrow 0$ and $k_n/(\log(n)) \rightarrow \infty$ as $n \rightarrow \infty$. Moreover, it follows from the definition of $m$, $p_0\neq p_1$, and eq. (\ref{noisyRegInTermsOfTrueRegEq}) that for $x \in \left[\frac{2m}{3},\frac{2m+1}{3}\right]$ we have
{\small
\begin{align*}
\left(\regressionFunction(x)-\frac{1}{2}\right)\left(\noisy[\regressionFunction](x)-\frac{1}{2}\right)<0.
\end{align*}}
Hence, $\marginalDistribution\left(\mathcal{A}_0(\regressionFunction,\noisy[\regressionFunction])\right) = 1/3>0$. Thus, by Theorem \ref{inconsistencyThm}, the $k_n$-NN classifier is inconsistent with $\Prob_{train}=\Prob_{\left(\marginalDistribution,\noisy[\regressionFunction]\right)}$ and $\Prob_{test}=\Prob_{\left(\marginalDistribution,{\regressionFunction}\right)}$. 
On the other hand, one can readily check that Assumptions \ref{rangeAssumption} and \ref{majorityAssumption} hold. Moreover, Assumption \ref{regressionFunctionIsUniformlySmoothAssumption} holds with exponent $\measureSmoothnessExponent=1$ and constant $\measureSmoothnessConstant = 3$, and Assumption \ref{marginAssumption} holds with exponent $\marginExponent=0$ and constant $\marginConstant=1$. Thus, by Theorem \ref{riskBoundForUniformSmoothnessThm} combined with the Borel-Cantelli lemma, the $k_n$ method for asymmetric label noise (Algorithm \ref{KNNForLabelNoiseUniformCaseAlgo}) is consistent with $\Prob_{train}=\Prob_{\left(\marginalDistribution,\noisy[\regressionFunction]\right)}$ and $\Prob_{test}=\Prob_{\left(\marginalDistribution,{\regressionFunction}\right)}$ whenever $k_n/n \rightarrow 0$ and 
when $k_n/(\log(n)) \rightarrow \infty$ as $n \rightarrow \infty$.

\section{Conclusions}
We obtained fast rates in the presence of unknown asymmetric label noise that match the minimax optimal rates of the noiseless setting, up to logarithmic factors, under measure-smoothness and Tsybakov margin assumptions. On the practical side, our results provide theoretical support for the Robust kNN algorithm of \cite{Gao} whose analysis so far only exists under known noise probabilities. On the theoretical side, our results entail that under the stated conditions the statistical difficulty of classification with or without unknown asymmetric label noise is the same in the minimax sense. This is especially interesting given recent results which show that under more general non-parametric settings the optimal rates for unknown asymmetric label noise can be strictly slower than those for the noiseless case \cite{reeve2019COLT}. We have also seen that the algorithm achieving the rate in the presence of unknown asymmetric label noise must be different from any classical Bayes-consistent classifier, as those fail to be consistent under the label noise. Finally, a key ingredient in our analysis a simple method for estimating the maximum of a function that requires far less assumptions than existing mode estimators do and may have wider applicability.

\appendix

\if 0
\section{Appendix}\label{standardLemmasProofSec} 
 \begin{restatable}{lemma}{knnEstIsCloseToItsXConditionalExpectationLemma}
 \label{knnEstIsCloseToItsXConditionalExpectationLemma} Suppose that $\XFocusPoint$ is either a fixed point $x \in \suppMarginalDistribution$ or $X_j$ for some fixed $j \in [n]$. Given any $n\in \N$, $k \in [n]$ and $\xi>0$ we have
\begin{align*}
&\Prob_{\Zsample|\Xsample}\left[| \KNNest[f](\XFocusPoint)- \conditionalExpectKNNest[f](\XFocusPoint)|\geq \xi \right] \leq \exp\left(-2k \cdot \xi^2\right)
\end{align*}
 \end{restatable}
 
\begin{proof} 
Note that $\XFocusPoint$ is a deterministic function of $\Xsample=\left\lbrace X_i \right\rbrace_{i \in [n]}$. By construction we have $\KNNest[f](\XFocusPoint) = \frac{1}{k} \cdot \sum_{q \in [k]}Z_{\tau_{n,q}(\XFocusPoint)}$. Moreover, for each $q \in [k]$, $Z_{\tau_{n,q}(\XFocusPoint)}$ is a random variable in $[0,1]$ with conditional expectation (given $\Xsample$) equal to $\E_{\Zsample|\Xsample}\left[Z_{\tau_{n,q}(\XFocusPoint)}\right] = f\left(X_{\tau_{n,q}(\XFocusPoint)}\right)$, and these are independent. 
Hence, the lemma follows from Chernoff bounds \cite{boucheron2013concentration}.
\end{proof}
\fi

\subsection*{Acknowledgement} 
This work is funded by EPSRC under Fellowship grant EP/P004245/1, and a Turing Fellowship (grant
EP/N510129/1). We would also like to thank the anonymous reviewers for their careful feedback, and Joe Mellor for improving the presentation.



\newpage
\onecolumn


\appendix


\section{Proof of Theorem \ref{inconsistencyThm}}

The proof of Theorem \ref{inconsistencyThm} is as follows.

\begin{proof} 
Define the input set of disagreement with margin $\theta$:
{
\begin{align}
\mathcal{A}_{\theta}(\regressionFunction,\noisy[\regressionFunction]) &:= \left(\left\lbrace \regressionFunction(x)\leq \frac{1}{2}-\theta  \right\rbrace\cap \left\lbrace \noisy[\regressionFunction](x)\geq \frac{1}{2}+\theta  \right\rbrace\right)\\ &\cup  \left(\left\lbrace \regressionFunction(x)\geq \frac{1}{2}+\theta  \right\rbrace\cap \left\lbrace \noisy[\regressionFunction](x)\leq \frac{1}{2}-\theta  \right\rbrace\right).
\label{thetaDisagreement}
\end{align}}
We can write $\mathcal{A}_0(\regressionFunction,\noisy[\regressionFunction])$
as a union of such sets:
$\mathcal{A}_0(\regressionFunction) = \bigcup_{\theta>0}\mathcal{A}_{\theta}(\regressionFunction)$, and hence
\begin{align*}
\lim_{\theta \rightarrow 0}\left\lbrace \marginalDistribution\left( \mathcal{A}_{\theta}(\regressionFunction,\noisy[\regressionFunction])\right) \right\rbrace = \marginalDistribution\left(\mathcal{A}_0(\regressionFunction,\noisy[\regressionFunction]) \right)>0.
\end{align*}
Now, take some $\theta>0$ s.t. $\mathcal{A}_0(\regressionFunction,\noisy[\regressionFunction])>0$.
Lemma \ref{inconsistencyLemma} below will show that
\begin{align}\label{ineqApplyingInconsistencyLemma}
\excessRisk\left({\classifier};\Prob_{\left(\marginalDistribution,\regressionFunction\right)}\right)+\excessRisk\left({\classifier};\Prob_{\left(\marginalDistribution,\noisy[\regressionFunction]\right)}\right) \geq \theta \cdot \marginalDistribution\left(\mathcal{A}_{\theta}(\regressionFunction,\noisy[\regressionFunction])\right)>0.
\end{align}
Since $\hat{\classifier}$ is consistent with 
$\Prob_{train}=\Prob_{test}=\Prob_{\left(\marginalDistribution,\noisy[\regressionFunction]\right)}$, we have $\lim_{n \rightarrow \infty}\excessRisk\left(\hat{\classifier}_n;\Prob_{\left(\marginalDistribution,\noisy[\regressionFunction]\right)}\right) = 0$. Hence, from eq. (\ref{ineqApplyingInconsistencyLemma}) it follows that $\limsup_{n \rightarrow \infty}\excessRisk\left(\hat{\classifier}_n;\Prob_{\left(\marginalDistribution,{\regressionFunction}\right)}\right) \geq \theta \cdot \marginalDistribution\left(\mathcal{A}_{\theta}(\regressionFunction,\noisy[\regressionFunction])\right)>0$. 
That is, $\hat{\classifier}$ is inconsistent when trained with train distribution  $\Prob_{\left(\marginalDistribution,\noisy[\regressionFunction]\right)}$ and tested on distribution  $\Prob_{\left(\marginalDistribution,{\regressionFunction}\right)}$.
\end{proof}

It remains to prove the lemma used in the proof above. 
\begin{lemma}\label{inconsistencyLemma} 
Let $\marginalDistribution$ be a Borel probability measure on $\X$. 
Given $\regressionFunction:\X \rightarrow [0,1]$ and $\theta>0$ consider 
the set $\mathcal{A}_{\theta}(\regressionFunction,\noisy[\regressionFunction]) \subseteq \X$ as defined in eq. (\ref{thetaDisagreement}).
Then given any classifier $\classifier: \X \rightarrow \{0,1\}$ we have $
\excessRisk\left({\classifier};\Prob_{\left(\marginalDistribution,\regressionFunction\right)}\right)+\excessRisk\left({\classifier};\Prob_{\left(\marginalDistribution,\noisy[\regressionFunction]\right)}\right) \geq \theta \cdot \marginalDistribution\left(\mathcal{A}_{\theta}(\regressionFunction,\noisy[\regressionFunction])\right)$.
\end{lemma}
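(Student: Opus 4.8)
The plan is to reduce the inequality to a pointwise counting argument, built on the standard excess-risk decomposition of a plug-in classifier together with the observation that on $\mathcal{A}_{\theta}(\regressionFunction,\noisy[\regressionFunction])$ the Bayes classifiers of the two distributions are forced to disagree. Writing $\classifier^*_{\regressionFunction}(x) := \one\{\regressionFunction(x) \geq 1/2\}$ and $\classifier^*_{\noisy[\regressionFunction]}(x) := \one\{\noisy[\regressionFunction](x) \geq 1/2\}$ for the Bayes classifiers of $\Prob_{(\marginalDistribution,\regressionFunction)}$ and $\Prob_{(\marginalDistribution,\noisy[\regressionFunction])}$ respectively, I would first recall the excess-risk representation already used in the proof of Theorem \ref{riskBoundForUniformSmoothnessThm} (in the same normalisation), namely
\[
\excessRisk\left(\classifier; \Prob_{(\marginalDistribution, \regressionFunction)}\right) = \int_{\X} \left|\regressionFunction(x) - \tfrac{1}{2}\right| \cdot \one\left\{\classifier(x) \neq \classifier^*_{\regressionFunction}(x)\right\} \, d\marginalDistribution(x),
\]
and the analogous identity for $\noisy[\regressionFunction]$. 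Both follow from the elementary fact that the conditional excess risk at a point $x$ vanishes when $\classifier$ agrees with the Bayes label there and equals $|\regressionFunction(x)-1/2|$ otherwise.

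The crux is the following pointwise claim for $x \in \mathcal{A}_{\theta}(\regressionFunction,\noisy[\regressionFunction])$. By the definition in eq. (\ref{thetaDisagreement}), either $\regressionFunction(x) \leq 1/2 - \theta$ together with $\noisy[\regressionFunction](x) \geq 1/2 + \theta$, or $\regressionFunction(x) \geq 1/2 + \theta$ together with $\noisy[\regressionFunction](x) \leq 1/2 - \theta$. In either case two things hold simultaneously: the margins satisfy $|\regressionFunction(x)-1/2| \geq \theta$ and $|\noisy[\regressionFunction](x)-1/2| \geq \theta$, and the Bayes labels differ, $\classifier^*_{\regressionFunction}(x) \neq \classifier^*_{\noisy[\regressionFunction]}(x)$. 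Since $\classifier(x)$ is a single value in $\{0,1\}$, it can coincide with at most one of these two distinct labels, so exactly one of $\one\{\classifier(x) \neq \classifier^*_{\regressionFunction}(x)\}$ and $\one\{\classifier(x) \neq \classifier^*_{\noisy[\regressionFunction]}(x)\}$ equals $1$; hence the two disagreement indicators sum to exactly $1$ everywhere on $\mathcal{A}_{\theta}(\regressionFunction,\noisy[\regressionFunction])$.

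To conclude I would add the two excess-risk integrals, restrict each to $\mathcal{A}_{\theta}(\regressionFunction,\noisy[\regressionFunction])$ (discarding the remaining nonnegative contribution), lower bound each margin factor by $\theta$ on that set, and then use the indicator identity just established:
\begin{align*}
\excessRisk\left(\classifier; \Prob_{(\marginalDistribution, \regressionFunction)}\right) + \excessRisk\left(\classifier; \Prob_{(\marginalDistribution, \noisy[\regressionFunction])}\right)
&\geq \theta \int_{\mathcal{A}_{\theta}(\regressionFunction,\noisy[\regressionFunction])} \left( \one\{\classifier(x) \neq \classifier^*_{\regressionFunction}(x)\} + \one\{\classifier(x) \neq \classifier^*_{\noisy[\regressionFunction]}(x)\} \right) d\marginalDistribution(x) \\
&= \theta \cdot \marginalDistribution\left(\mathcal{A}_{\theta}(\regressionFunction, \noisy[\regressionFunction])\right),
\end{align*}
which is the claimed bound. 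I do not anticipate a genuine obstacle: once the excess-risk representation is in place the argument is a one-line counting step. The only point requiring mild care is consistency with the paper's normalisation of the conditional excess risk (which carries a factor $|\regressionFunction-1/2|$ rather than $|2\regressionFunction-1|$); under either convention the margin factor is at least $\theta$, so the stated inequality holds — indeed the stronger constant would only improve it.
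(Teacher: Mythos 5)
Your proof is correct and takes essentially the same route as the paper's: the pointwise excess-risk representation, the observation that on $\mathcal{A}_{\theta}(\regressionFunction,\noisy[\regressionFunction])$ the two disagreement indicators sum to one while both margin factors are at least $\theta$, and integration against $\marginalDistribution$. Your closing remark on normalisation is also consistent with the paper, which uses the $\left|\regressionFunction(x)-\frac{1}{2}\right|$ convention in eq.~(\ref{excessRiskEqualTo}).
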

\begin{proof} Recall that, for any regression function $\tilde{\regressionFunction}:\X \rightarrow [0,1]$ the excess risk can be written as: $\excessRisk\left(\classifier;\Prob_{\left(\marginalDistribution,\tilde{\regressionFunction}\right)}\right) = $
{\scriptsize
\begin{align}\label{excessRiskEqualTo}
\int \left| \tilde{\regressionFunction}(x)-\frac{1}{2}\right| \cdot \one \left\lbrace  \left(\tilde{\regressionFunction}(x)-\frac{1}{2}\right)\left(\classifier(x)-\frac{1}{2}\right)<0\right\rbrace d\marginalDistribution(x).
\end{align}}
Now if $x \in \mathcal{A}_{\theta}(\regressionFunction,\noisy[\regressionFunction])$ then $\left(\regressionFunction(x)-\frac{1}{2}\right)\left(\noisy[\regressionFunction](x)-\frac{1}{2}\right)<0$ so for both possible values $\classifier(x) \in \{0,1\}$ we have
\begin{align*}
\one \left\lbrace  \left({\regressionFunction}(x)-\frac{1}{2}\right)\left(\classifier(x)-\frac{1}{2}\right)<0\right\rbrace +  \one \left\lbrace  \left(\noisy[\regressionFunction](x)-\frac{1}{2}\right)\left(\classifier(x)-\frac{1}{2}\right)<0\right\rbrace = 1.
\end{align*}
Moreover, if $x \in \mathcal{A}_{\theta}(\regressionFunction,\noisy[\regressionFunction])$ then $\min \left\lbrace \left|{\regressionFunction}(x)-\frac{1}{2}\right|,\left|\noisy[\regressionFunction](x)-\frac{1}{2}\right| \right\rbrace\geq \theta$ and so 
\begin{align}
\left|{\regressionFunction}(x)-\frac{1}{2}\right|\cdot \one \left\lbrace  \left({\regressionFunction}(x)-\frac{1}{2}\right)\left(\classifier(x)-\frac{1}{2}\right)<0\right\rbrace + \left|\noisy[\regressionFunction](x)-\frac{1}{2}\right|\cdot  \one \left\lbrace  \left(\noisy[\regressionFunction](x)-\frac{1}{2}\right)\left(\classifier(x)-\frac{1}{2}\right)<0\right\rbrace \geq \theta.
\end{align}
Integrating with respect to $\marginalDistribution$ and applying (\ref{excessRiskEqualTo}) to both $\Prob_{\left(\marginalDistribution,\regressionFunction\right)}$ and $\Prob_{\left(\marginalDistribution,\noisy[\regressionFunction]\right)}$ gives the conclusion of the lemma.
\end{proof}

\newpage
\section{Proof of Lemma \ref{elementaryRatioLemmaForCorruptedRegressionFunctionEstimation}}\label{proofOfelementaryRatioLemmaForCorruptedRegressionFunctionEstimationAppendix}

\begin{proof}
 Given $\hat{a},a \in [-1,1]$, $b,\hat{b} >0$ with $|\hat{b}-b| \leq b/2$, and $a/b \in [0,1]$,
\begin{align}\label{elementaryRatioIneqs}
\left|\frac{\hat{a}}{\hat{b}}-\frac{a}{b}\right|= \frac{1}{\hat{b}}\cdot \left| (\hat{a}-a)+\frac{a}{b} \cdot (b-\hat{b})\right| \leq \frac{2}{b}\left( |\hat{a}-a|+ \left| \frac{a}{b}\right| \cdot |\hat{b}-b| \right) \leq  \frac{4}{b}\cdot \max\left\lbrace |\hat{a}-a|, |\hat{b}-b| \right\rbrace,
\end{align}
where we have used the fact that $\hat{b} \geq b/2$.
By the definition of $\est[\regressionFunction](x)$ together with eq. (\ref{noisyRegInTermsOfTrueRegEq}) we have
\begin{align*}
\est[\regressionFunction](x):= \frac{\noisy[{\est[\regressionFunction]}](x)-\estProbFlipZeroToOne}{1-\estProbFlipZeroToOne-\estProbFlipOneToZero}\hspace{1cm}\text{and}\hspace{1cm}
\regressionFunction(x) = \frac{\noisy[\regressionFunction](x)-\probFlipZeroToOne}{1-\probFlipZeroToOne-\probFlipOneToZero}.
\end{align*}
Now take $\hat{a} = \estNoisyRegressionFunction(x)-\estProbFlipZeroToOne$, $a =\noisyRegressionFunction(x)-\probFlipZeroToOne$, $\hat{b} = 1-\estProbFlipZeroToOne-\estProbFlipOneToZero$ and $b = 1-\probFlipZeroToOne-\probFlipOneToZero$. Given the assumptions that  $\probFlipZeroToOne+\probFlipOneToZero<1$, so $b>0$ and $\max\left\lbrace\left| \estProbFlipZeroToOne-\probFlipZeroToOne \right|, \left| \estProbFlipOneToZero-\probFlipOneToZero \right| \right\rbrace  \leq \left(1-\probFlipZeroToOne-\probFlipOneToZero\right)/4$ this implies
\begin{align*}
|\hat{b}-b| = 2\cdot \max\left\lbrace\left| \estProbFlipZeroToOne-\probFlipZeroToOne \right|, \left| \estProbFlipOneToZero-\probFlipOneToZero \right| \right\rbrace \leq \frac{1}{2}\cdot \left( 1-\probFlipZeroToOne-\probFlipOneToZero\right) = \frac{b}{2},
\end{align*}
which also implies $\hat{b}\geq b/2 >0$. Hence, by (\ref{elementaryRatioIneqs}) we deduce
\begin{align*}
\left| \est[\regressionFunction](x)-\regressionFunction(x)\right| \leq \frac{4}{1-\probFlipZeroToOne-\probFlipOneToZero} \cdot \max\left\lbrace \left| \left( \estNoisyRegressionFunction(x)-\estProbFlipZeroToOne\right)- \left(\noisyRegressionFunction(x)-\probFlipZeroToOne\right) \right| ,\left| \left(1-\estProbFlipZeroToOne-\estProbFlipOneToZero \right)- \left(1-\probFlipZeroToOne-\probFlipOneToZero\right) \right| \right\rbrace\\
\leq \frac{8}{1-\probFlipZeroToOne-\probFlipOneToZero} \cdot \max\left\lbrace \left|  \estNoisyRegressionFunction(x)-\noisyRegressionFunction(x) \right| ,\left| \estProbFlipZeroToOne- \probFlipZeroToOne \right|,\left| \estProbFlipOneToZero-\probFlipOneToZero \right| \right\rbrace.
\end{align*}
This completes the proof of Lemma \ref{elementaryRatioLemmaForCorruptedRegressionFunctionEstimation}.
\end{proof}

\end{document}